\newcommand{\Cpp}{C\raise.08ex\hbox{\tt ++}\xspace}
\newtheorem{lemma}{Lemma}
\newtheorem{theorem}{Theorem}
\newtheorem{defin}{Definition}
\newcommand{\ignore}[1]{}
\newcommand{\addMP}[1]{}
\newcommand{\addExample}[1]{}
    \newcommand{\OS}[1]{{}}
    \newcommand{\ST}[1]{{}}
    \newcommand{\OS}[1]{\textcolor{orange}{#1}}
    \newcommand{\ST}[1]{{\textcolor{blue}{#1}}}    
\newcommand{\Xdelta}{\mathcal{X}^{\delta}}
\newcommand{\Xnotdelta}{\mathcal{X}^{\overline{\delta}}}
\newcommand{\Xdeltafree}{\mathcal{X}^{\delta}_{\text{free}}}
\newcommand{\Xdeltaforb}{\mathcal{X}^{\delta}_{\text{forb}}}
\newcommand{\Xfree}{\mathcal{X}_{\text{free}}}
\newcommand{\Xforb}{\mathcal{X}_{\text{forb}}}
    \newcommand{\conference}[2]{#1}
    \newcommand{\conference}[2]{{#2}}
\definecolor{my_cyan}{HTML}{D7F3FF}
\definecolor{my_gray}{HTML}{D9D9D9}
\definecolor{my_red}{HTML}{FFD7D7}
\definecolor{my_dark_gray}{HTML}{AAAAAA}
\tikzstyle{startstop} = [rectangle, rounded corners, minimum width=3cm, minimum height=1cm, text centered, draw=black, fill=red!30]
\tikzstyle{process} = [rectangle, minimum width=3cm, minimum height=1cm, text centered, draw=black, fill=orange!30]
\tikzstyle{arrow} = [thick,->,>=stealth]
\title{\LARGE \bf
{From Configuration-Space Clearance to Feature-Space Margin:}\\
{Sample Complexity in Learning-Based Collision Detection}
}
\author{Sapir Tubul$^{1}$, Aviv Tamar$^{2}$, {Kiril} Solovey$^{2}$, and Oren Salzman$^{1}$%
\thanks{$^{1}$S. Tubul and O. Salzman are with the Faculty of Computer Science, Technion - Israel Institute of Technology, Israel.}%
\thanks{$^{2}$A. Tamar and K. Solovey are with the Faculty of Electrical and Computer Engineering, Technion - Israel Institute of Technology, Israel.}%
\thanks{Email: \{tubulsapir, osalzman\}@cs.technion.ac.il, \{avivt, {kirilsol}\}@ee.technion.ac.il}%
}
\begin{document}

\maketitle
\thispagestyle{empty}
\pagestyle{empty}

\begin{abstract}

Motion planning is a central challenge in robotics, with learning-based approaches gaining significant attention in recent years. Our work focuses on a specific aspect of these approaches: using machine-learning techniques, particularly Support Vector Machines (SVM), to evaluate whether robot configurations are collision free, an operation termed ``collision detection''. Despite the growing popularity of these methods, there is a lack of theory supporting their efficiency and prediction accuracy. 
This is in stark contrast to the rich {theoretical} results of machine-learning methods in general and of {SVMs} in particular.
Our work bridges this gap by analyzing the sample complexity of an SVM classifier for learning-based collision detection in motion planning. We bound the number of samples needed to achieve a specified accuracy at a given confidence level. This result is stated in terms relevant to robot motion-planning such as the system's clearance. 
Building on these theoretical results, we propose a collision-detection algorithm that can also provide statistical guarantees on the algorithm's error in classifying robot configurations as collision-free or not. 
\end{abstract}

\section{Introduction}

Robot motion-planning (MP) is a fundamental problem in robotics where, in its simplest form, we are tasked with computing a collision-free path for a robot system navigating among a set of known obstacles~\cite{lavalle2006planning,HSS17,latombe2012robot}.
While a multitude of approaches and algorithms exist for MP (see~\cite{karaman2011sampling,elbanhawi2014sampling,ZuckerRDPKDBS13,CohenCL14} for a very partial list), they typically share a common algorithmic building block called \emph{collision detection}~(CD). CD tests whether the robot collides with an obstacle when placed at a given configuration (a set of parameters that determine the robot's position and orientation).
%

CD is often considered the computational bottleneck in MP algorithms~\cite{lavalle2006planning,Salzman19,Hauser15}. 
Roughly speaking, this is because 
(i)~each CD operation is computationally expensive as it corresponds to a set of geometric tests between the robot's and obstacles' geometric representations~\cite{latombe2012robot}
and
(ii)~it is used numerous times within MP algorithms to evaluate whether configurations and local motions are collision-free or not~\cite{kavraki1996probabilistic,lavalle2001randomized}.

Some MP algorithms attempt to reduce the number of CD calls using different algorithmic techniques such as lazy evaluation~\cite{Hauser15,DellinS16,MandalikaCSS19}. 
An alternative approach, which is the focus of this work, is speeding up CD via machine-learning (ML) techniques in which  exact CD algorithms are replaced with faster statistical methods which we refer to as \emph{Learned CD} or LCD~\cite{das2020learning,ZhiDY22,munoz2023collisiongp,DasY20,chase2020neural}. 

Such methods were successfully integrated into MP algorithms~\cite{wang2021survey,mcmahon2022survey}. However, to the {best} of our knowledge, {LCDs} are used without any guarantees on the result obtained: LCD can produce both false positive and false negative results. I.e., they may state that a configuration is in collision when it is not and vice versa.
%

The ML community has a rich set of tools and analysis providing statistical guarantees on the quality of different algorithms~\cite{shalev2014understanding}. A prominent example is the notion of \emph{sample complexity} which refers to the number of training examples or data points required by an ML algorithm to learn a task with a desired level of accuracy and confidence. 
Understanding sample complexity is crucial for ML practitioners to design efficient algorithms and ensure they have sufficient data for their models to learn effectively.
However, it is not clear a priori how to express such results for robot MP problems. Roughly speaking, this is because both (i)~the parameters which are used to express ML and MP problems  differ and
(ii)~the type of guarantees typically offered by the two families of algorithms differ.
%

In this work we bridge this gap and provide statistical guarantees on the sample complexity of a commonly used LCD for MP which is based on the well-known SVM algorithm~\cite{shalev2014understanding,cristianini2000introduction}. 
After discussing related work (Sec.~\ref{sec:related-work}), we provide the necessary background to understand our results (Sec.~\ref{sec:background}). 
We then present our main contribution which allows us to leverage existing results regarding sample complexity of SVM for LCDs (Sec.~\ref{sec:theory}).
These results cannot be used as-is since they require unrealistic assumptions for the analysis (specifically, that we only call the LCD for samples that are ``not too close'' to obstacles).
Thus, in Sec.~\ref{sec:applications}, we present an algorithm which, given an error $\varepsilon$ and confidence level $1-\xi$, returns an LCD that guarantees, with
confidence at least $1-\xi$, that the error of the CD on \emph{any} random configuration does not exceed $\varepsilon$.
After some experimental results (Sec.~\ref{sec:eval}) highlighting properties key to our theory, we conclude (Sec.~\ref{sec:future}) with a description of limitations of our work and potential future {directions.}

\section{Related Work}
\label{sec:related-work}

\subsection{Learning-Based Collision Detection in Motion Planning}
Learning-based approaches for CD have gained significant attention for improving the efficiency and adaptability of MP algorithms. These methods aim to reduce the computational burden of traditional CD by leveraging ML techniques to approximate the geometry of the configuration space.

A prominent example is using Support Vector Machines (SVMs). 
Das and Yip~\cite{das2020learning} introduced the Fastron algorithm, which employs SVM and active learning~\cite{settles2009active} to quickly identify forbidden configurations even when the obstacles move in the environment. 
This was later enhanced by incorporating forward kinematics into the learning process~\cite{das2020forward}, improving both accuracy and efficiency of LCD.

Alternative ML techniques beyond SVMs have also been introduced. 
For example, Yu and Gao~\cite{yu2021reducing} proposed using Graph Neural Networks (GNNs)~\cite{zhou2020graph} to reduce the computational cost of CD in MP.
This was later extended to dynamic environments by incorporating temporal encoding~\cite{zhang2022learning}.

Unfortunately, none of the aforementioned LCDs provide formal guarantees on the classification accuracy of new configurations.

{Looking beyond CD, it is worth noting that recent works used ML for configuration-space representations: 
Li et al.~\cite{Li2024} introduced a method for configuration space distance fields, 
while Koptev et al.~\cite{Koptev2022} developed a neural implicit function for reactive manipulator control.
These approaches focus on approximating configuration space distances or using neural models for implicit 
collision representations.}

\subsection{Sample Complexity}
Sample complexity refers to the minimum number of training samples that an ML algorithm needs in order to successfully learn a target function or achieve a desired level of performance. It is typically expressed as a function of the desired accuracy ($\varepsilon$) and confidence ($\xi$).
Sample complexity is closely related to the notion of VC dimension~\cite{vapnik2015uniform} which, roughly speaking, measures the complexity of a hypothesis set or classification model.

For binary classification problems, the fundamental theorem of statistical learning states that the sample complexity is linearly related to the VC dimension of the hypothesis class~\cite{shalev2014understanding}. 
Specifically, for a hypothesis class $H$ with VC dimension $d$, the sample complexity $m(\varepsilon, \xi)$ to achieve an error of at most $\varepsilon$ with probability at least $1-\xi$ is bounded by 
$
m(\varepsilon, \xi) = O\left({(d + \log(1/\xi))} / {\varepsilon^2}\right)
$.

The VC dimension of linear SVMs in an $n$-dimensional space is $n+1$, immediately leading to sample complexity bounds for these classifiers~\cite{shalev2014understanding}.
Importantly, for certain  settings which contain some \emph{margin} between different classes (see Sec.~\ref{sec:theory} for a precise definition), the sample complexity of SVMs can be independent of the input space's dimension.

Sample complexity was recently used for robotics-related problems.
For instance, recent work~\cite{TsaoSP20, DayanSPH23}  used geometric techniques to such  bounds for PRM~\cite{kavraki1996probabilistic} and related methods to achieve a desired solution quality using deterministic sampling. Another work~\cite{shaw2024practicalfinitesamplebounds} considers probabilistic sample complexity bounds for randomized sampling in the context of task and motion planning~\cite{garrett2021integrated}.

\section{Algorithmic Background}
\label{sec:background}

\subsection{Classification}
Classification is a supervised-learning technique where an ML algorithm learns to categorize input data into predefined classes or categories based on labeled training examples. We provide a high-level description of the problem and refer the reader to standard textbooks (e.g.,~\cite{shalev2014understanding}) for additional details.

The input to the problem is a training set~
$S =~\{(x_1, y_1), \ldots, (x_m, y_m)\}$ of $m$ pairs,
where~$x_i \in \mathcal{X}$ represents an input from a domain $\mathcal{X}$
and $y_i \in \mathcal{Y}$ is its corresponding label from a label space $\mathcal{Y}$. 
Roughly speaking, the objective of the learning process is to find a function~$h: \mathcal{X} \rightarrow \mathcal{Y}$, called a \emph{hypothesis}, that approximates the true underlying function $f: \mathcal{X} \rightarrow \mathcal{Y}$ which generated the labels.

To evaluate the performance of a learned model, a key concept is 0-1 loss: Given a distribution $\mathcal{D}$ over~$\mathcal{X}$, a target function $f$, and a hypothesis $h$, the 0-1~loss~$\mathcal{L}_{\mathcal{D},f}(h)$ is defined as:~$
    \mathcal{L}_{\mathcal{D},f}(h) :=~\mathbb{P}_{x\sim \mathcal{D}}\left[h(x) \neq f(x)\right]
$.

This 0-1 loss represents the probability of misclassification when the prediction rule $h$ is applied to an input $x$ drawn randomly according to the distribution $\mathcal{D}$. In other words, it quantifies the likelihood that our learned model will make an incorrect prediction on new, unseen data from the same distribution as our training set. 
In the rest of the paper, when we use the term ``error'', we refer to the 0-1 loss.

\subsection{Support Vector Machines}
Support Vector Machines (SVMs) are a class of classification algorithms. In the context of binary classification, which is relevant to our setting of LCD, SVMs work by finding the hyperplane that best separates two classes of data points in a high-dimensional space. The ``support vectors'' are the data points that lie closest to this decision boundary. SVMs can efficiently handle non-linear classification using \emph{kernels}, which implicitly map inputs into high-dimensional feature spaces. This allows SVMs to find complex decision boundaries while maintaining computational efficiency.

In our work, we specifically employ Hard-SVM, a variant of SVM that assumes the data is linearly separable in the feature space. For a training set $\{(x_i, y_i)\}_{i=1}^m$ where $x_i \in \mathbb{R}^d$ and $y_i \in \{-1,+1\}$, the Hard-SVM optimization problem is formulated as
$\min_{w, b}~\frac{1}{2}\|w\|^2~\text{s.t.}~y_i(w \cdot x_i + b) \geq 1$.
Here,~$w$ is the normal vector to the hyperplane and $b$ is the bias term. It can be shown that Hard-SVM finds the hyperplane with the largest margin between the two classes~\cite{shalev2014understanding}.

\section{Theoretical Framework}
\label{sec:theory}

In this section, we present our theoretical framework for analyzing sample complexity in ML-based CD. 
We begin by introducing key definitions and then present two important theorems that form the foundation of our approach. 

\subsection{Definitions}
Let $\mathcal{X}$ denote the C-space of a robot, partitioned into the free space, $\Xfree$, and the forbidden space, $\Xforb$.
For simplicity, we assume that $\mathcal{X}$ is an axis-aligned $d$-dimensional unit hypercube. Namely, $\mathcal{X} := [0, 1]^d$.

\begin{defin}[Clearance]
\label{def:clearance}
For a configuration $\mathbf{x} \in \mathcal{X}$, we define its clearance $\text{cl}(\mathbf{x})$ as:
\[
\text{cl}(\mathbf{x}) := \begin{cases}
\min_{\mathbf{x'} \in \Xforb} \| \mathbf{x}-\mathbf{x'} \|_2, & \text{if } \mathbf{x} \in \Xfree, \\
\min_{\mathbf{x'} \in \Xfree} \| \mathbf{x}-\mathbf{x'} \|_2, & \text{if } \mathbf{x} \in \Xforb.
\end{cases}
\]
where $\| \cdot \|_2$ denotes the Euclidean norm in the C-space.
\end{defin}

\begin{defin}[$\delta$-interior $\&$ $\delta$-boundary]
\label{def:interior}
We define the $\delta\text{-interior}$ of a C-space $\mathcal{X}$  as:
$
\Xdelta := 
    \{\mathbf{x} \in \mathcal{X}~\vert~\text{cl}(\mathbf{x}) \geq~\delta \}
$.
Similarly, the $\delta$-boundary of a C-space $\mathcal{X}$ is defined  as~$\Xnotdelta :=~\mathcal{X} \setminus \Xdelta$.
Finally, for convenience we set~$\Xdeltafree :=~\Xdelta \cap \Xfree$
and
$\Xdeltaforb := \Xdelta \cap \Xforb$.

\end{defin}

See {Fig.~\ref{fig:workspace_delta}} for a visualization of the notions of clearance, $\delta$-interior and $\delta$-boundary.

\begin{figure}[t]
    \centering
    \subfloat[]{%
        \includegraphics[width=0.42\linewidth]{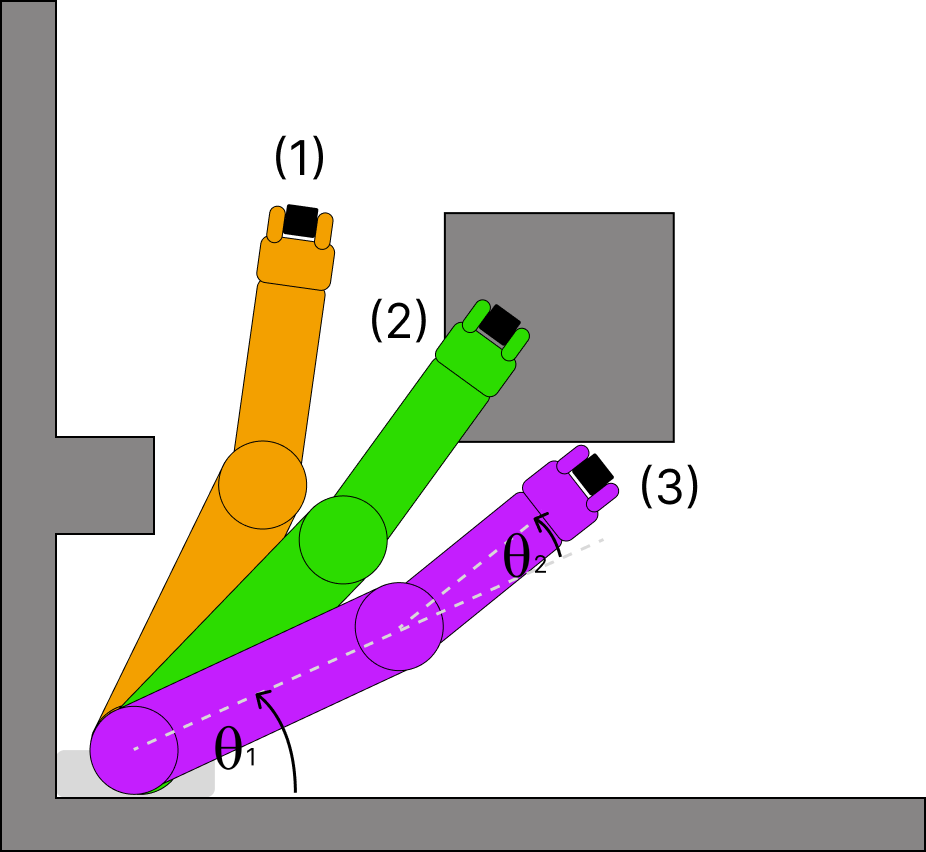}%
        \label{fig:workspace}%
    }%
    \hfill
    \subfloat[]{%
        \includegraphics[width=0.27\linewidth]{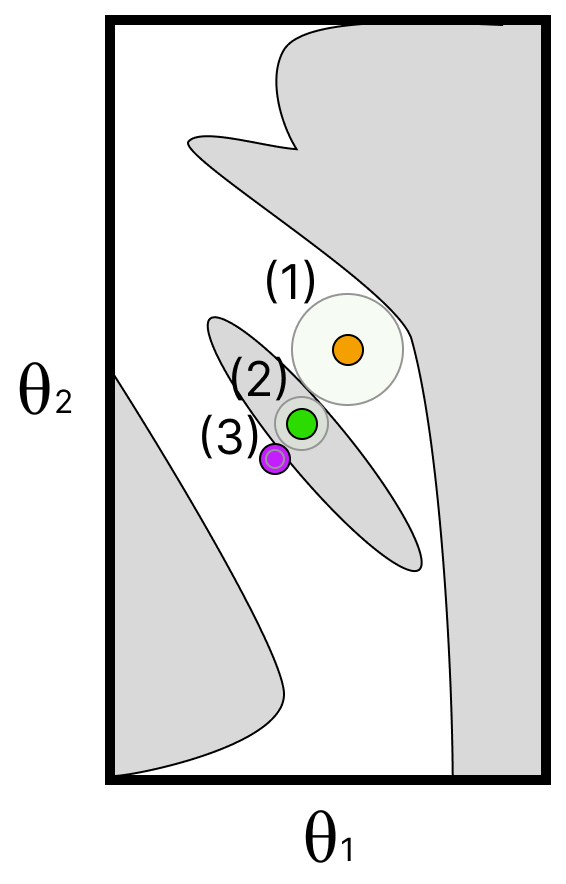}%
        \label{fig:cspace}%
    }%
    \hfill
    \subfloat[]{%
        \includegraphics[width=0.27\linewidth]{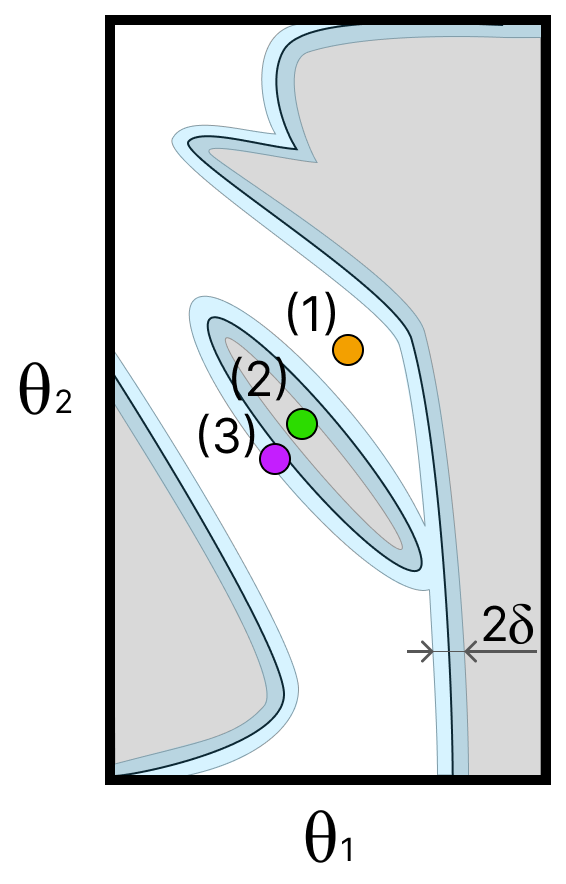}%
        \label{fig:delta_boundary}%
    }%
    \vspace{- 1mm}
    \caption{
    (a) Workspace with a two-link robot in three different configurations, each with its own color and label: orange~(1), green~(2) and purple (3). 
    (b) C-space representation depicting~$\Xfree$ 
    (\fcolorbox{black}{white}{\rule{0pt}{3pt}\rule{3pt}{0pt}})
    and~$\Xforb$
    (\fcolorbox{black}{my_gray}{\rule{0pt}{3pt}\rule{3pt}{0pt}}),
    as well as configurations whose color corresponds to the robot's color in that configuration. Circles around each configuration correspond to its clearances. 
    (c)~$\delta\text{-interior}$~$\Xdelta$ 
    (\fcolorbox{black}{white}{\rule{0pt}{3pt}\rule{3pt}{0pt}}~and~\fcolorbox{black}{my_gray}{\rule{0pt}{3pt}\rule{3pt}{0pt}})
    and $\delta$-boundary~$\Xnotdelta$~(\fcolorbox{black}{my_cyan}{\rule{0pt}{3pt}\rule{3pt}{0pt}}).
    Configurations (1) and (2) lie in~$\Xdelta$, while~(3) is in~$\Xnotdelta$.
    }
    \label{fig:workspace_delta}
    \vspace{-6mm}
\end{figure}

As we will see shortly, it will be convenient to discretize the continuous C-space. Thus, we overlay a $d$-dimensional axis-aligned grid over~$\mathcal{X}$ with cell size $1/n$ and call $n$ the \emph{resolution}. 

\begin{defin}
\label{cell}
Let $\mathbf{i} =\langle i_1,i_2,\ldots,i_d \rangle \in [n]^d$ be a \(d\)-dimensional index.
We define \emph{\(\textsc{Cell}_{\mathbf{i}}\)} to be a grid cell s.t.
\[
    \textsc{Cell}_{\mathbf{i}} 
    := 
    \{(x_1, \ldots,x_d) \mid \forall j \in [d],~x_j 
        \in 
    \left[
        \frac{{i_j-1}}{n}, \frac{{i_j}}{n}
    \right)
    \}.
\]
Similarly, we define 
\emph{$\textsc{Center}_{\mathbf{i}}$}$\in \textsc{Cell}_{\mathbf{i}}$
to be the configuration lying at the center of \(\textsc{Cell}_{\mathbf{i}}\).
Namely,
\[
    \textsc{Center}_{\mathbf{i}} 
    := 
    \left(
        \frac{{i_1-0.5}}{n}, 
        \frac{{i_2-0.5}}{n}, 
        \ldots,
        \frac{{i_d-0.5}}{n}
    \right).
\]
\end{defin}

It will also be useful to define the following function: 
Given $\sigma \in \mathbb{R}^{>0}$,
we set \(h_{\mathbf{i}}^\sigma:\mathcal{X}\rightarrow [0,1]\) as follows:
\[
h_{\mathbf{i}}^\sigma(\mathbf{x}) := 
        \exp\left(-\frac{\|\textsc{Center}_{\mathbf{i}} - \mathbf{x}\|^2}{\sigma^2}\right).
\]
Now, we define the following mapping 
$\phi_\sigma : \mathcal{X} \rightarrow [0,1]^{n^d}$:
{
\begin{align}
    \phi_\sigma(\mathbf{x}) := 
        (&h_{1,1,\ldots,1}^\sigma(\mathbf{x}),  
        \ldots, 
        h_{n,1,\ldots,1}^\sigma(\mathbf{x}), \notag \\
        %
        %
        &\ldots \notag \\
        &h_{n,n,\ldots,1}^\sigma(\mathbf{x}),  
        \ldots, 
        h_{n,n,\ldots,n}^\sigma(\mathbf{x}))^T.
        \label{eq:feature_vec}
\end{align}
}
When clear from the context we will omit $\sigma$ from $\phi_\sigma$ and~$h^\sigma$ and simply write~$\phi, h$.

The mapping $\phi$ uses the $h$ functions to create a \emph{feature vector} for each configuration $x \in \mathcal{X}$. 
Conceptually, each element of this vector represents the ``influence" of a particular grid cell on $x$. This results in a rich, smooth representation of the C-space that captures both local and global structure, making it suitable for ML algorithms to work with.
Given such a mapping $\phi$, 
we define the \emph{feature space} of~$\mathcal{X}$ as~$\Phi:= \{ \phi(x) \mid x \in \mathcal{X} \} $.

Through the notion of clearance (Def.~\ref{def:clearance}), the $\delta$-interior~$\mathcal{X}^\delta$ (Def.~\ref{def:interior}) allows us to ensure a separation between points in~$\mathcal{X}^\delta \cap \ \Xfree$ 
and~$\mathcal{X}^\delta \cap \ \Xforb$.
Here, this separation is~${2}\delta$. 
The following definition establishes a similar concept of separation called \emph{margin} but in the feature space.

\begin{defin}[margin]
\label{def:margin}
Let $\phi$ be some mapping over~$\mathcal{X}$.
We say that there is a \emph{feature-space margin} $\gamma$ (for some~$\gamma >~0$) over the $\delta$-interior of $\mathcal{X}$
if there exists some vector $\mathbf{\alpha}$ with $|\mathbf{\alpha}| = 1$ s.t.
$\forall \mathbf{x} \in \mathcal{X}^\delta,~ f(\mathbf{x}) \cdot \mathbf{\alpha} \cdot \phi (\mathbf{x}) \geq \gamma$, 
where $f(\mathbf{x}) = +1$ if $\mathbf{x} \in \mathcal{X}_{\rm forb}$ and $f(\mathbf{x}) = -1$ if $\mathbf{x} \in \mathcal{X}_{\rm free}$.

Equivalently, we can express this as $\exists \mathbf{\alpha}$, such that,
(i)~if~$\mathbf{x} \in \mathcal{X}^\delta \cap \mathcal{X}_{\rm forb}$ then
$\mathbf{\alpha} \cdot \phi(\mathbf{x}) \geq \gamma$
and
(ii)~if~$\mathbf{x} \in~\mathcal{X}^\delta \cap~\mathcal{X}_{\rm free}$ then~$\mathbf{\alpha} \cdot \phi(\mathbf{x}) \leq -\gamma$.
\end{defin}

\subsection{Theorems}
Our first theorem establishes a connection between configurations in the $\delta$-interior of a C-space and the feature-space margin over this $\delta$-interior. 
This connection is key to our analysis as it allows us to use tools designed to analyze ML classification algorithms such as SVM by assuming that there exists some margin in~$\Phi$.

Hereafter, we  provide proof sketches for our theoretic results and refer the reader to
\conference{the extended version of our paper\footnote{\url{https://tinyurl.com/SampleComplexityInLearnedCD}.}}{the appendix} for the full proofs.

\begin{theorem}[Feature-Space Margin]
\label{thm:Feature-Space Margin}
For any $\delta {\in (0, \sqrt{d})}$, 
set~$n: = {{\sqrt{d}}/{\delta}}$ 
and~$\sigma^2 := {{2\delta^2}/{\ln(9n^d)}}$.
Given the feature mapping $\phi_\sigma$, 
then the set of feature-space points~$\{\phi_\sigma(x)~\vert~x\in \Xdeltafree \}$
and
$\{\phi_\sigma(x)~\vert~x\in \Xdeltaforb \}$
have a feature-space margin of at least $\gamma^* \geq 8 / \left( {9^{\frac{9}{8}} n^{\frac{5d}{8}}}\right)$.
\end{theorem}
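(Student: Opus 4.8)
The plan is to exhibit an explicit unit vector $\mathbf{\alpha}$ and show it realizes the claimed margin. The natural choice is to label each grid cell by the class of its center: set $f_{\mathbf{i}} := +1$ if $\textsc{Center}_{\mathbf{i}} \in \Xforb$ and $f_{\mathbf{i}} := -1$ otherwise, and define $\mathbf{\alpha} := n^{-d/2}(f_{\mathbf{i}})_{\mathbf{i}\in[n]^d}$. Since each of the $n^d$ coordinates has magnitude $n^{-d/2}$, we have $|\mathbf{\alpha}|=1$ as required. With this choice, $f(\mathbf{x})\,\mathbf{\alpha}\cdot\phi_\sigma(\mathbf{x}) = n^{-d/2}\sum_{\mathbf{i}} f(\mathbf{x})\,f_{\mathbf{i}}\,h_{\mathbf{i}}^\sigma(\mathbf{x})$, so the goal reduces to showing that the single cell $\mathbf{i}_0$ containing $\mathbf{x}$ dominates the entire sum: its term is positive and large, while the total contribution of the oppositely-labeled cells is only a small fraction of it.

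First I would lower-bound the dominant term. The center of $\mathbf{i}_0$ lies within half a cell diagonal of $\mathbf{x}$, i.e.\ at distance at most $\tfrac12\sqrt{d}/n = \delta/2$. Because $\mathbf{x}\in\Xdelta$ has clearance at least $\delta$, every point within distance $<\delta$ of $\mathbf{x}$ shares its class; in particular $f_{\mathbf{i}_0}=f(\mathbf{x})$, so this term is positive. Substituting $\sigma^2 = 2\delta^2/\ln(9n^d)$ gives $h_{\mathbf{i}_0}^\sigma(\mathbf{x}) \ge \exp(-(\delta/2)^2/\sigma^2) = \exp(-\ln(9n^d)/8) = (9n^d)^{-1/8}$, so the dominant term alone contributes at least $n^{-d/2}(9n^d)^{-1/8}$.

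The crux is to control the oppositely-labeled cells. Every such cell has its center in the opposite class, hence at distance $r_{\mathbf{i}}\ge\delta$ from $\mathbf{x}$; dropping the same-class cells (which only help), it suffices to bound $\Sigma := \sum_{r_{\mathbf{i}}\ge\delta} h_{\mathbf{i}}^\sigma(\mathbf{x})$. A naive union bound over the up-to-$n^d$ such cells is far too weak, so the key step is to exploit the Gaussian decay together with the product structure of the grid. Splitting the exponent as $\exp(-r_{\mathbf{i}}^2/\sigma^2) \le \exp(-\delta^2/(2\sigma^2))\exp(-r_{\mathbf{i}}^2/(2\sigma^2))$ for $r_{\mathbf{i}}\ge\delta$, and then extending the second factor to a sum over \emph{all} cells, the sum factorizes across the $d$ coordinates because $\|\textsc{Center}_{\mathbf{i}}-\mathbf{x}\|^2 = \sum_j (\textsc{Center}_{\mathbf{i},j}-x_j)^2$. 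Each one-dimensional factor is a sum of Gaussian samples on a lattice of spacing $1/n$, which I bound by its peak value plus a Gaussian integral, $\sum_{i_j}\exp(-(\cdot)^2/(2\sigma^2)) \le 1 + n\sigma\sqrt{2\pi}$. Using $\exp(-\delta^2/(2\sigma^2)) = (9n^d)^{-1/4}$ this yields $\Sigma \le (9n^d)^{-1/4}(1+n\sigma\sqrt{2\pi})^d$, and substituting the prescribed $n$ and $\sigma$ shows $\Sigma \le \tfrac19 (9n^d)^{-1/8}$.

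Combining the two estimates gives, uniformly over $\mathbf{x}\in\Xdelta$, $f(\mathbf{x})\,\mathbf{\alpha}\cdot\phi_\sigma(\mathbf{x}) \ge n^{-d/2}\big[(9n^d)^{-1/8} - \tfrac19(9n^d)^{-1/8}\big] = \tfrac89\,n^{-d/2}(9n^d)^{-1/8} = 8\cdot 9^{-9/8}\, n^{-5d/8}$, which is exactly the claimed $\gamma^*$. I expect the main difficulty to lie in making the lattice-sum estimate tight enough to produce the constant $\tfrac19$ (and hence the precise $8/9^{9/8}$ factor): the unimodal-sampling bound together with the specific choice of $\sigma$ are what let the geometric decay beat the $n^d$ cell count. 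A minor technical point is that $n=\sqrt d/\delta$ need not be an integer, which is handled by rounding the resolution without affecting the argument.
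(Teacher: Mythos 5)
Your proof has the same overall architecture as the paper's (an explicit per-cell label vector $\mathbf{\alpha}$, a dominant-cell term versus the rest, then normalization), and your dominant-term bound and normalization are fine --- in fact your exactly-unit-norm $\mathbf{\alpha}$ is cleaner than the paper's worst-case $\sqrt{n^d}$ normalization. The gap is precisely at the step you yourself flagged as the ``main difficulty'': the claim that $\Sigma \le (9n^d)^{-1/4}\bigl(1+n\sigma\sqrt{2\pi}\bigr)^d \le \tfrac19 (9n^d)^{-1/8}$. The first inequality is a correct lattice estimate, but the second is false for wide, perfectly legitimate parameter ranges. Substituting the prescribed parameters gives $n\sigma\sqrt{2\pi} = \sqrt{4\pi d/\ln(9n^d)}$, so your claim requires $\bigl(1+\sqrt{4\pi d/\ln(9n^d)}\bigr)^d \le \tfrac19(9n^d)^{1/8}$. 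For $d=2$, $n=10$ (i.e.\ $\delta\approx 0.14$ in the unit square) the left side is $\approx 8.5$ while the right side is $\approx 0.26$; for fixed $n=2$ and growing $d$ the left side grows like $5.3^d$ against roughly $0.15\cdot 1.09^d$; for $d=1$ it fails unless $n$ is of order $10^8$. So the chain of inequalities does not close except when $\delta$ is extremely small.

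The root cause is your choice to label cells by the class of their \emph{center}: that only guarantees an oppositely-labeled center lies in the opposite class, hence at distance $\ge \delta$ from $\mathbf{x}$ (it may sit arbitrarily close to the free/forbidden boundary), and $e^{-\delta^2/\sigma^2}=(9n^d)^{-1/2}$ is simply not small enough to beat the cell count, no matter how the summation is organized. The paper instead labels a cell $\pm 1$ only if it contains a point of $\Xdeltaforb$ or $\Xdeltafree$ (and $0$ otherwise); since two $\delta$-interior points of opposite classes are at least $2\delta$ apart (the segment joining them crosses the boundary, and each endpoint has clearance $\ge\delta$), every oppositely-labeled center is at distance at least $2\delta - \sqrt{d}/(2n) = 1.5\delta$ from $\mathbf{x}$. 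That factor is decisive: $e^{-(1.5\delta)^2/\sigma^2}=(9n^d)^{-9/8}$, so even the naive union bound over $n^d$ cells gives exactly $9^{-9/8}n^{-d/8}=\tfrac19(9n^d)^{-1/8}$ --- the very bound you need for $\Sigma$, with no factorization trick required. It is also not clear that your $\mathbf{\alpha}$ achieves the claimed margin at all (not merely that the analysis is loose), since center-labeling permits exponentially many opposite cells at distance barely above $\delta$. The fix is to adopt the paper's $\delta$-interior-based labeling (cells meeting no $\delta$-interior point get weight $0$); with that change, your dominant-term estimate, the $\tfrac19$ cancellation, and your normalization go through verbatim and yield $\gamma^*\ge 8/(9^{9/8}n^{5d/8})$.
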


\begin{proof}[sketch]    
We start by showing that our choice of~$n: = {{\sqrt{d}} / {\delta}}$ ensures that any cell that contains at least one configuration in the $\delta$-interior cannot contain configurations in both $\Xfree$ and $\Xforb$.
Using this property allows us to assign a label to each grid cell indicating whether it is considered in the $\delta$-boundary, 
in~$\Xdeltafree$ or 
in~$\Xdeltaforb$.
Moreover, using our assumption on the clearance $\delta$, we can show that cells labeled in~$\Xdeltafree$ are ``not too close'' to cells in~$\Xdeltaforb$.

Now, this cell labeling is used to define a vector $\alpha$ such that there is a feature-space margin of at least $\gamma^*$.
Conceptually, 
the crux of the proof lies in the fact that 
if~$x \in \Xdeltafree$ then the weight of the feature corresponding to the cell that contains $x$ outweighs the aggregate weight of all other cells because they are ``not too close''.
A similar argument holds when~$x \in \Xdeltaforb$.
\end{proof}

After establishing a connection between the clearance $\delta$ in the C-space~$\mathcal{X}$ and the feature-space margin over~$\mathcal{X}^\delta$, we use it to bound the sample complexity for learning a~CD over~$\mathcal{X}^\delta$. Importantly, we ignore how samples that lie outside~$\mathcal{X}^\delta$ will be labeled by this learned CD and will address this shortly.

\begin{theorem}[Sample Complexity for learning \( \Xdelta \)]
\label{thm:sample_complexity_delta}
The sample complexity of Hard-SVM, to ensure with at least~$1 - \xi$ confidence that the $0$-$1$ loss on samples drawn from~\( \Xdelta \) does not exceed \( \varepsilon \), is bounded  by:
\begin{equation}
\label{eq:sample_complexity}
m_{\Xdelta}(\varepsilon, \xi) \leq \frac{1}{\varepsilon^2} \left[
\frac{9^{\frac{9}{4}}}{4}\left({\frac{\sqrt{d}}{\delta}}\right)^{\frac{9d}{4}}+8\ln\left(\frac{2}{\xi}\right)
\right].
\end{equation}
\end{theorem}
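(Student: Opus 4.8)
The plan is to reduce the statement to a standard margin-based generalization bound for Hard-SVM (e.g.,~\cite{shalev2014understanding}), using Theorem~\ref{thm:Feature-Space Margin} to supply the margin $\gamma^*$ and a short direct computation to supply the radius of the feature space. Such bounds have the form: with probability at least $1-\xi$, a max-margin separator $h$ trained on $m$ i.i.d.\ samples satisfies $\mathcal{L}_{\Xdelta}(h) \le \ell_S^{\gamma} + \frac{2R}{\gamma\sqrt{m}} + \sqrt{\frac{2\ln(2/\xi)}{m}}$, where $\ell_S^{\gamma}$ is the empirical $\gamma$-margin loss, $R$ bounds $\|\phi_\sigma(x)\|$ over the domain, and $\gamma$ is the target margin. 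Once $R$ and $\gamma$ are instantiated, the remainder is solving for $m$.

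First I would bound the feature-space radius. Since each coordinate $h_{\mathbf{i}}^\sigma(x) = \exp(-\|\textsc{Center}_{\mathbf{i}} - x\|^2/\sigma^2)$ lies in $[0,1]$ and there are $n^d$ cells, we have $\|\phi_\sigma(x)\|^2 = \sum_{\mathbf{i}} h_{\mathbf{i}}^\sigma(x)^2 \le n^d$ for every $x \in \mathcal{X}$, so $R := n^{d/2}$ is a valid radius. This crude estimate already suffices; a tighter one would only improve constants. Next I would invoke Theorem~\ref{thm:Feature-Space Margin}: with $n = \sqrt{d}/\delta$ and the prescribed $\sigma$, the images $\{\phi_\sigma(x) : x \in \Xdeltafree\}$ and $\{\phi_\sigma(x) : x \in \Xdeltaforb\}$ are linearly separated by a unit vector $\alpha$ with margin at least $\gamma^* \ge 8/(9^{9/8} n^{5d/8})$. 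Because this separation holds over \emph{all} of $\Xdelta$, any sample drawn from $\Xdelta$ is separable with margin $\gamma^*$; hence the Hard-SVM solution attains margin $\ge \gamma^*$ and its empirical $\gamma^*$-margin loss vanishes, so $\ell_S^{\gamma^*} = 0$.

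Plugging $R = n^{d/2}$ and $\gamma = \gamma^*$ into the margin bound and requiring each of the two remaining terms to be at most $\varepsilon/2$ yields the conditions $m \ge 16R^2/((\gamma^*)^2\varepsilon^2)$ and $m \ge 8\ln(2/\xi)/\varepsilon^2$; taking $m$ to be their sum satisfies both. It then remains to substitute: using $(\gamma^*)^2 = 64/(9^{9/4} n^{5d/4})$ and $R^2 = n^d$ gives $16R^2/(\gamma^*)^2 = \tfrac{9^{9/4}}{4} n^{9d/4}$, which with $n = \sqrt{d}/\delta$ equals $\tfrac{9^{9/4}}{4}(\sqrt{d}/\delta)^{9d/4}$, and adding the second term $8\ln(2/\xi)$ reproduces the claimed bound exactly.

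I expect the main obstacle to be bookkeeping rather than conceptual: I must select a margin generalization bound whose constants are compatible with the target (in particular one whose confidence term contributes $8\ln(2/\xi)$ and whose complexity term contributes $16R^2/\gamma^2$), and verify its hypotheses — a bounded-norm domain and separability by a fixed unit-margin hyperplane — are met. The conceptual crux, converting configuration-space clearance into a feature-space margin, has already been discharged by Theorem~\ref{thm:Feature-Space Margin}; the only genuine subtlety left is that the guarantee must be read as conditional on sampling from $\Xdelta$, which is precisely why the margin of Theorem~\ref{thm:Feature-Space Margin} (valid only over $\Xdelta$) is the right quantity to feed into the bound.
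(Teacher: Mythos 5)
Your proposal is correct and takes essentially the same route as the paper: both instantiate the margin-based Hard-SVM generalization bound of Shalev-Shwartz and Ben-David (Thm.~15.4) with feature-space radius $R = n^{d/2}$ (from $\|\phi_\sigma(x)\|^2 \le n^d$) and the margin $\gamma^* \ge 8/(9^{9/8} n^{5d/8})$ supplied by Thm.~\ref{thm:Feature-Space Margin}, then invert to solve for $m$. The only cosmetic difference is the inversion step --- you require each of the two error terms to be at most $\varepsilon/2$ and sum the resulting sample-size requirements, whereas the paper combines the terms via $\sqrt{a}+\sqrt{b} \le 2\sqrt{a+b}$ --- and both yield the identical final bound.
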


Thm.~\ref{thm:sample_complexity_delta} can be interpreted as follows: 
given a clearance~$\delta \in~(0, \sqrt{d})$, 
error value $\varepsilon >0$
and confidence level~$1-\xi$,
if we sample $m_{\Xdelta}(\varepsilon, \xi)$ configurations (using any distribution~$\mathcal{D}$) in $\mathcal{X}^\delta$, accurately label each one if it is in $\mathcal{X}_{\text {free}}$ or in $\mathcal{X}_{\text {forb}}$ and run the Hard-SVM on this labelled data set, 
then the resultant classifier will guarantee with confidence level $1 - \xi$ that the classification error of any point sampled from $\mathcal{X}^\delta$ using  distribution~$\mathcal{D}$ is bounded by~$\varepsilon$.

\begin{proof}[sketch]
    The proof relies on Thm.~\ref{thm:Feature-Space Margin} and existing results (see, e.g.,~\cite[Thm.~15.4]{shalev2014understanding}) that bound the error of Hard-SVM for a feature space with a known margin.
    Together with careful algebraic manipulation and parameter choices, the proof is completed.
\end{proof}

\subsection{Discussion}
\label{subsec:discussion1}
A key caveat of Thm.~\ref{thm:sample_complexity_delta} is that it only bounds the classification  error over samples in the $\delta$-interior~$\Xdelta$ and provides no guarantees on how samples in the $\delta$-boundary~$\Xnotdelta$ are classified.
One approach to address this issue is to minimize~$\delta$ which will then maximize~$\Xdelta$ and allow us to accurately learn the structure of the C-space. Unfortunately, following Eq.~\ref{eq:sample_complexity}, this implies that the sample complexity dramatically grows and becomes, in the limit, unbounded. Formally, for any fixed $\varepsilon, \xi$ we have that
$\lim_{\delta \rightarrow 0} m_{\Xdelta}(\varepsilon, \xi) = \infty$.
On the other extreme, as~$\delta$ grows, the region for which we get no guarantees on the classification error of Hard-SVM grows and may exceed any error bound provided by a user.

In Sec.~\ref{sec:applications} we show how we can still use our results to suggest a learned CD that provides statistical guarantees on its classification error for any sample without the need to know C-space parameters such as clearance.

\section{LCD with statistical guarantees}
\label{sec:applications}

In this section we propose an algorithm that uses our theoretical results to produce a learned CD that provides statistical guarantees on its classification error.
We start (Sec.~\ref{subsec:LCD}) with an algorithm that receives as input some clearance $\delta$ and number of samples $m$ for the training set of the LCD. As we will see, some parameters impose constraints that are too strict, and the algorithm cannot output an LCD with corresponding statistical guarantees.
However, we show that there exist some parameters for which this algorithm succeeds. To this end, we show (Sec.~\ref{subsec:lcd2}) how the algorithm suggested in Sec.~\ref{subsec:LCD} can be used to obtain an LCD with the desired statistical guarantees.

\subsection{LCD with Guarantees---Foundation}
\label{subsec:LCD}
\begin{algorithm}[t]
\caption{$(\delta,m)$-Learning-based collision detection}
\label{algo:Learning Phase}
\begin{algorithmic}[1]
\Require 
    $\varepsilon$, 
    $\xi$, 
    $\delta$,
    $m$,
    $\mathcal{X} = [0,1]^d$,      
    \texttt{CD} 

\State $S \gets \textsc{Sample}(m, \mathcal{X}$) \label{line:sample}
\State $S_{\text{interior}} \gets  \{s \in S~\vert~\text{\texttt{CD}.cl}(s) > \delta \}$ \label{line:S interior}
\State $\varepsilon_{\Xdelta} \gets \textsc{GetInteriorError}(\varepsilon, \xi, |S_{\text{interior}}|, |S|)$ \label{line:epsilon interior}
\If{
    $\underbrace{\left(\varepsilon_{\Xdelta} \leq 0\right)}_{(C1)}$ 
    or 
    $\underbrace{\left(|S_{\text{interior}}| < m_{\Xdelta} (\varepsilon_{\Xdelta}, \xi)\right)}_{(C2)}$} \label{line:number of sample condition}
\State \Return \text{`Fail'} \label{line:fail}
\EndIf
\vspace{2mm}

\State \texttt{LCD} $\gets$ \textsc{HardSVM}$\left(\{(\phi(s), \texttt{CD}(s)) \mid s \in S_{\text{interior}}\}\right)$ \label{line:LCD}
\State \Return \texttt{LCD} \label{line:return LCD}
\end{algorithmic}
\end{algorithm}

The foundation of our LCD, outlined in Alg.~\ref{algo:Learning Phase}, receives as an input 
a desired error bound~$\varepsilon$ and confidence level~$1 - \xi$,
a clearance value~$\delta$, 
a number~$m$ of configurations to sample
and
an exact collision detector  \texttt{CD}.
The algorithm starts by sampling~$m$ configurations from the C-space~$\mathcal{X}$ (Line~\ref{line:sample})
and for each sample~$s$ computes its clearance $\text{cl}(s)$ using the exact collision detector.
The set of samples whose clearance is larger than~$\delta$ are stored in a set $S_{\text{interior}}$ (Line~\ref{line:S interior}).

$S_{\text{interior}}$ is used in the function \textsc{GetInteriorError} (Line~\ref{line:epsilon interior}) to estimate $|\Xdelta |$, the size of~$\Xdelta$. 
This is done by computing
$\hat{p}_{\Xdelta} := |S_{\text{interior}}| / |S|$, the empirical proportion of samples in the $\delta$-interior, 
and~$z_{\xi/2}$, the critical value of the standard normal distribution corresponding to the confidence level~$(1 - \xi)$~\cite{Wackerly2008}.
Using these values, \textsc{GetInteriorError} returns~$\varepsilon_{\Xdelta}$ which, as we will see, is the classification error on samples in the $\delta$-interior~$\Xdelta$ that is tolerable to obtain an overall  classification error of $\varepsilon$.
Specifically, it is computed~as
\begin{equation}
\label{eq:classification_error}
\varepsilon_{\Xdelta} = 
    \frac{
        \varepsilon - (1 - \hat{p}_{\Xdelta}) - z_{\xi/2} \cdot \sqrt{\frac{\hat{p}_{\Xdelta} (1 - \hat{p}_{\Xdelta})}{|S|}}
    }{
        \hat{p}_{\Xdelta} + z_{\xi/2} \cdot \sqrt{\frac{\hat{p}_{\Xdelta} (1 - \hat{p}_{\Xdelta})}{|S|}}
    }.
\end{equation}

As discussed in Sec.~\ref{subsec:discussion1} and we will formally show, a wrong choice of parameters $m$ and $\delta$ may cause the algorithm to fail to be able to compute a collision detector that satisfies the required bounds. 
This happens in two cases:
\begin{itemize}
    \item[(C1)] If $\delta$ is too big or if $m$ is too small, then for fixed values of $\varepsilon$ and~$\xi$,~$\varepsilon_{\Xdelta}$ may be negative.

    \item[(C2)] If $|S_{\text{interior}}|$ is below the sample complexity as defined in Eq.~\eqref{eq:sample_complexity}.
\end{itemize}
If one of these cases holds (Line~\ref{line:number of sample condition}), the algorithm halts and returns failure (Line~\ref{line:fail}).

If none of these two cases hold, 
all samples in $S_{\text{interior}}$ are labelled as collision-free or not\footnote{In practice, we call \texttt{CD} once for each sample and not multiple times as described in Alg.~\ref{algo:Learning Phase}, Lines~\ref{line:S interior}, and~\ref{line:LCD}. This is done by caching results and not detailed in order to simplify the exposition.}.
These labeled samples are then used to train a HardSVM  classifier (with a linear kernel) after computing the set of features~$\phi$ (Eq.~\eqref{eq:feature_vec}) for each sample (Line~\ref{line:LCD}).
Finally, the learned collision detector \texttt{LCD} is returned (Line~\ref{line:return LCD}).

\begin{lemma}[LCD existence]
    \label{lemma:lcd_loss}
    Given an error bound~$\varepsilon$ and confidence level~$1 - \xi$, there exist a clearance~$\delta$ and a number of samples~$m$ such that Alg.~\ref{algo:Learning Phase} does not return `Fail'.
    For such values of~$\delta$ and~$m$, \texttt{LCD}, the learned collision detector returned, guarantees with confidence at least~$1 - \xi$ that the error on a random sample from $\mathcal{X}$ does not exceed~$\varepsilon$.
\end{lemma}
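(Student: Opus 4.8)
The plan is to split the statement into a \emph{correctness} claim (whenever Alg.~\ref{algo:Learning Phase} does not fail, the returned \texttt{LCD} has error at most $\varepsilon$ with confidence $1-\xi$) and an \emph{existence} claim (some $\delta,m$ avoid both failure conditions). I would begin with correctness. The starting point is the law of total probability over the partition $\mathcal{X}=\Xdelta\cup\Xnotdelta$: writing $p_{\Xdelta}:=\mathbb{P}_{x\sim\calD}[x\in\Xdelta]$, the $0$-$1$ loss of any hypothesis $h$ decomposes as $\calL_{\calD}(h)=p_{\Xdelta}\,\calL_{\Xdelta}(h)+(1-p_{\Xdelta})\,\calL_{\Xnotdelta}(h)$, where $\calL_{\Xdelta},\calL_{\Xnotdelta}$ denote the losses conditioned on each region. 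Since the algorithm trains only on $S_{\text{interior}}$ and has no control over the boundary, I would bound $\calL_{\Xnotdelta}(h)\le 1$ trivially and bound $\calL_{\Xdelta}(h)\le\varepsilon_{\Xdelta}$ by invoking Thm.~\ref{thm:sample_complexity_delta}, which applies because passing condition (C2) guarantees $|S_{\text{interior}}|\ge m_{\Xdelta}(\varepsilon_{\Xdelta},\xi)$ and the interior samples are i.i.d.\ draws from $\calD$ restricted to $\Xdelta$. This yields $\calL_{\calD}(h)\le 1-p_{\Xdelta}(1-\varepsilon_{\Xdelta})$, which is $\le\varepsilon$ exactly when $\varepsilon_{\Xdelta}\le(\varepsilon-(1-p_{\Xdelta}))/p_{\Xdelta}$.

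The second step is to discharge the unknown true proportion $p_{\Xdelta}$ in that inequality. I would use the normal (Wald) confidence interval built into \textsc{GetInteriorError}: with $\Delta:=z_{\xi/2}\sqrt{\hat{p}_{\Xdelta}(1-\hat{p}_{\Xdelta})/|S|}$, the event $p_{\Xdelta}\ge\hat{p}_{\Xdelta}-\Delta=:p^-_{\Xdelta}$ holds with high confidence. Since $1-\varepsilon_{\Xdelta}\ge 0$, the bound $1-p_{\Xdelta}(1-\varepsilon_{\Xdelta})$ is decreasing in $p_{\Xdelta}$, so substituting the worst case $p_{\Xdelta}=p^-_{\Xdelta}$ shows it suffices that $\varepsilon_{\Xdelta}\le(\varepsilon-(1-p^-_{\Xdelta}))/p^-_{\Xdelta}$. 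The value returned by Eq.~\eqref{eq:classification_error} is exactly $(\varepsilon-(1-p^-_{\Xdelta}))/p^+_{\Xdelta}$ with $p^+_{\Xdelta}=\hat{p}_{\Xdelta}+\Delta$; since the numerator is nonnegative (this is precisely the negation of (C1)) and $p^+_{\Xdelta}\ge p^-_{\Xdelta}$, this choice is conservative and satisfies the required inequality. Combining the confidence that $p_{\Xdelta}\ge p^-_{\Xdelta}$ with the confidence supplied by Thm.~\ref{thm:sample_complexity_delta} via a union bound (splitting the budget $\xi$ between the proportion estimate and the SVM generalization) gives $\calL_{\calD}(h)\le\varepsilon$ with confidence at least $1-\xi$.

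For existence, I would fix the distribution-dependent quantity $p_{\Xnotdelta}=1-p_{\Xdelta}$ and observe that, under the mild regularity assumption that the true decision boundary has $\calD$-measure zero, $\Xnotdelta=\{x:\mathrm{cl}(x)<\delta\}$ shrinks to a null set as $\delta\to 0$, hence $p_{\Xnotdelta}\to 0$. Choose $\delta$ small enough that $p_{\Xnotdelta}<\varepsilon$; then as $m\to\infty$ we have $\Delta\to 0$ and $\hat{p}_{\Xdelta}\to p_{\Xdelta}$, so $\varepsilon_{\Xdelta}\to(\varepsilon-p_{\Xnotdelta})/p_{\Xdelta}>0$, ruling out (C1). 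Finally, $|S_{\text{interior}}|$ concentrates around $p_{\Xdelta}\,m$ and grows without bound, while $m_{\Xdelta}(\varepsilon_{\Xdelta},\xi)$ converges to the finite constant obtained by plugging the positive limit of $\varepsilon_{\Xdelta}$ and the fixed $\delta$ into Eq.~\eqref{eq:sample_complexity}; therefore (C2) is also eventually avoided, establishing existence.

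The main obstacle I anticipate is the confidence accounting in the second step: the two failure events---the proportion lying outside its confidence interval and the SVM exceeding $\varepsilon_{\Xdelta}$---must be combined so that the total failure probability stays at most $\xi$, which requires carefully splitting $\xi$ (e.g., a one-sided reading of the $z_{\xi/2}$ interval paired with an appropriate confidence parameter in the sample-complexity call) and justifying that, conditioned on $|S_{\text{interior}}|$, the interior samples are genuinely i.i.d.\ from $\calD$ restricted to $\Xdelta$ so that Thm.~\ref{thm:sample_complexity_delta} applies verbatim. Secondary care is needed for the regularity assumption that forces $p_{\Xnotdelta}\to 0$, and for replacing the normal approximation by a rigorous finite-sample tail bound (e.g., Hoeffding) should an exact rather than asymptotic guarantee be required.
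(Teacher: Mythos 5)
Your proposal is correct and rests on the same skeleton as the paper's proof---the error decomposition of Eq.~\eqref{eq:error-decomposition}, the trivial bound $\varepsilon_{\Xnotdelta}\le 1$, Thm.~\ref{thm:sample_complexity_delta} on the interior, and the binomial proportion confidence interval (Lemma~\ref{lemma:binomial_bounds}) to pass between $\hat{p}_{\Xdelta}$ and $p_{\Xdelta}$---but the two halves are instantiated differently, and each version buys something. On correctness, the paper disposes of the claim in one line (``an immediate application of Eq.~\eqref{eq:error-decomposition} and Thm.~\ref{thm:sample_complexity_delta}''), whereas you actually carry it out: your observation that Eq.~\eqref{eq:classification_error} returns $\left(\varepsilon-(1-p^-_{\Xdelta})\right)/p^+_{\Xdelta}$, i.e.\ the lower confidence limit in the numerator and the upper one in the denominator, so that the returned $\varepsilon_{\Xdelta}$ is conservative under the worst-case $p_{\Xdelta}=p^-_{\Xdelta}$, is exactly the monotonicity argument the paper never spells out. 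On existence, the routes genuinely diverge: the paper fixes $\delta_{\max}$ by the exact equality $p_{\mathcal{X}^{\delta_{\max}}}=1-\varepsilon$, chooses $\delta^*<\delta_{\max}$ with $p_{\mathcal{X}^{\delta^*}}=(1-\varepsilon)+\Delta^*$, and then exhibits an \emph{explicit finite} $m^*=\max(\kappa_1,\kappa_2)$ with $\kappa_1=(2z_{\xi/2}/\Delta^*)^2$, verifying that (C1) and (C2) fail via two inequality chains; you instead send $\delta\to 0$ and $m\to\infty$ under the assumption that the zero-clearance boundary has $\mathcal{D}$-measure zero. The paper's construction yields concrete constants (which is what makes the doubling scheme of Alg.~\ref{algo:AdaptiveLearning} and Lemma~\ref{lemma:convergence} go through after $O(\log)$ iterations), while your asymptotic argument is less quantitative; in exchange, you surface two points the paper leaves implicit or unaddressed: (i) monotonicity of $\delta\mapsto p_{\mathcal{X}^{\delta}}$ alone does not guarantee a clearance where $p_{\mathcal{X}^{\delta}}$ \emph{equals} $1-\varepsilon$ (an intermediate-value/regularity property is needed, which your measure-zero-boundary assumption supplies), and (ii) the confidence budget must be split by a union bound between the Wald interval event and the Hard-SVM generalization event---a bookkeeping step the paper's ``with confidence at least $1-\xi$'' glosses over. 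Both issues affect the paper's own proof at least as much as yours, so neither constitutes a gap in your proposal relative to the paper's standard of rigor.
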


\begin{proof}
    Recall, that Thm.~\ref{thm:sample_complexity_delta} only provides bounds on how Hard-SVM will classify samples in the $\delta$-interior. 
    Thus, the error $\varepsilon$ of a classifier on a sample~$x \in \mathcal{X}$ will be
    \begin{equation}
    \label{eq:error-decomposition}
    \varepsilon := 
        \text{Pr}[x \in \Xdelta] \cdot \varepsilon_{\Xdelta}
        + 
        \text{Pr}[x \in \Xnotdelta] \cdot \varepsilon_{\Xnotdelta}.    
    \end{equation}
    Here,~$\text{Pr}[x \in \Xdelta]$ and $\text{Pr}[x \in \Xnotdelta]$ are the probability of sampling in $\Xdelta$ and $\Xnotdelta$, respectively and 
    $\varepsilon_{\Xdelta} \in [0,1]$ and~$\varepsilon_{\Xnotdelta}\in [0,1]$ are the error of classifying a sample if it lies in~$\Xdelta$ and $\Xnotdelta$, respectively.

    Denote $p_{\Xdelta} = \text{Pr}[x \in \Xdelta]$, 
    and using the fact that
    $\varepsilon_{\Xnotdelta} \leq 1$
    and that we require
    $\varepsilon_{\Xdelta} \geq 0$,
    we must have that 
    $ p_{\Xdelta} \geq 1 - \varepsilon$.
    Now, set $\delta_{\max}$ to be the clearance\footnote{Strictly speaking, $\delta_{\max}$ is a function of the C-space and should be written as $\delta_{\max}(\mathcal{X})$, but for brevity we use $\delta_{\max}$ and not $\delta_{\max}(\mathcal{X})$.} for which 
    $ p_{\mathcal{X}^{\delta_{\max}}} = 1 - \varepsilon$.
    Note that for a given a C-space $\mathcal{X}$ and an error $\varepsilon$, for any clearance $\delta < \delta_{\max}$ we have that $\varepsilon_{\Xdelta} > 0$.
    
    We will now show that there exists a clearance~$\delta^* <~\delta_{\max}$ and a number of samples $m^*$ (which is a function of~$\delta^*$) for which Lemma~\ref{lemma:lcd_loss} holds which will complete the proof.
    Specifically, we will show that for values $\delta^*$ and~$m^*$ 
    (i)~Conditions (C1) and (C2) do not hold
    and that
    (ii)~\texttt{LCD} guarantees with confidence at least~$1 - \xi$ that the error on a random sample from $\mathcal{X}$ does not exceed~$\varepsilon$. This is an immediate application of Eq.~\ref{eq:error-decomposition} and Thm.~\ref{thm:sample_complexity_delta}.
    
    To show that Condition (C1) does not hold, 
    we start with a simple-yet-subtle observation: 
    for the clearances $\delta = 0$ and $\delta = \delta_{\max}$, the corresponding probability of sampling in the $\delta$-interior is~$p_{\mathcal{X}^{0}} = 1$ and  
    $p_{\mathcal{X}^{\delta_{\max}}} = 1 - \varepsilon < 1$, respectively. Thus, if we decrease $\delta$ from $\delta_{\max}$ to zero, the corresponding probability of sampling in the $\delta$-interior is a weakly monotonically increasing function.
    Thus, we set~$\delta^*$ to be some clearance such that 
    (i)~$0 <~\delta^* <~\delta_{\max}$
    and
    (ii)~$p_{\mathcal{X}^{\delta^*}} - p_{\mathcal{X}^{\delta_{\max}}} = \Delta^*$
    for some $\Delta^{*} > 0$ and note that such $\delta^*$ always exists.
    Finally, we set $m^*$ to be 
    $m^*:= \max\left( \kappa_1, \kappa_2 \right)$
    where 
    $\kappa_1:= (2 z_{\xi/2} / \Delta^{*})^2$
    and~{$\kappa_2:=~4m_{\mathcal{X}^{\delta^*}}(\varepsilon_{\mathcal{X}^{\delta^*}}, \xi ) / \left(4(1 - \varepsilon) + {3\Delta^*}\right)$}. 
    
    To show that Condition (C1) does not hold, we need to show that the numerator~$\varepsilon_{\mathcal{X}^{\delta^*}}^{\text{numer}}$ of Eq.~\eqref{eq:classification_error} is positive.
    \begin{align*}
        \varepsilon_{\mathcal{X}^{\delta^*}}^{\text{numer}} 
        &:=
            \varepsilon - (1 - \hat{p}_{\mathcal{X}^{\delta^*}}) - z_{\xi/2} \cdot \sqrt{\frac{\hat{p}_{\mathcal{X}^{\delta^*}} (1 - \hat{p}_{\mathcal{X}^{\delta^*}})}{m^*}} \\
        &\overbrace{\geq}^{(1)} 
            p_{\mathcal{X}^{\delta^*}} - (1 - \varepsilon) - 2\cdot z_{\xi/2} \cdot \sqrt{\frac{\hat{p}_{\mathcal{X}^{\delta^*}}(1 - \hat{p}_{\mathcal{X}^{\delta^*}})}{m^*}} \\
        &\overbrace{\geq}^{(2)} 
            p_{\mathcal{X}^{\delta^*}} - (1 - \varepsilon) - 2\cdot z_{\xi/2} \cdot \sqrt{\frac{1}{4m^*}} \\
        &\overbrace{=}^{(3)} 
            p_{\mathcal{X}^{\delta_{\max}}} - (1 - \varepsilon) + \Delta^* - 2\cdot z_{\xi/2} \cdot \sqrt{\frac{1}{4m^*}} \\
        &\overbrace{=}^{(4)} 
            \Delta^* - 2\cdot z_{\xi/2} \cdot \sqrt{\frac{1}{4m^*}}
        \overbrace{\geq}^{(5)} 
            \Delta^* / 2  > 0.
    \end{align*}
    Here, 
    {(1)~is an application of \conference{the binomial proportion confidence interval~\cite{agresti1998approximate, brown2002confidence} which guarantees that~$p_{\mathcal{X}^{\delta^*}} \leq~\hat{p}_{\mathcal{X}^{\delta^*}} +~z_{\xi/2} \cdot~\sqrt{(\hat{p}_{\mathcal{X}^{\delta^*}}\cdot (1 - \hat{p}_{\mathcal{X}^{\delta^*}}))/m}$}{Lemma~\ref{lemma:binomial_bounds}},
    (2)~uses the fact that~$\forall p \in~[0,1]~p \cdot(1-p) \leq 0.25$,
    (3)~follows from the definition of~$\delta_{\max}$ which ensures that~$p_{\mathcal{X}^{\delta_{\max}}} =~1 -~\varepsilon$,
    (4)~follows from the definition of~$\Delta^*$,
    (5)~follows from the fact that $m^* \geq \kappa_1$.
    Finally, we use the fact that $\Delta^* > 0 $.}
    
    Now, to show that Condition (C2) does not hold, will show that {$
    |S_{\text{interior}}| \geq m_{\mathcal{X}^{\delta^*}} (\varepsilon_{\mathcal{X}^{\delta^*}}, \xi)
    $}. 
    %
    Specifically, we have:    
    \begin{align*}
    |S_{\text{interior}}| &:= m^* \cdot \hat{p}_{\mathcal{X}^{\delta^*}} \\
    &\overbrace{\geq}^{(6)} m^* \cdot (p_{\mathcal{X}^{\delta^*}} - z_{\xi/2} \cdot \sqrt{\frac{\hat{p}_{\mathcal{X}^{\delta^*}}(1 - \hat{p}_{\mathcal{X}^{\delta^*}})}{m^*}}) \\
    &\overbrace{\geq}^{(7)} m^* \cdot (p_{\mathcal{X}^{\delta^*}} - z_{\xi/2} \cdot \sqrt{\frac{1}{4m^*}}) \\
    &\overbrace{\geq}^{(8)} m^* \cdot (p_{\mathcal{X}^{\delta^*}} - \frac{\Delta^*}{4}) \\
    &\overbrace{=}^{(9)}  m^* \cdot ((1 - \varepsilon) + \frac{3\Delta^*}{4})
    \overbrace{\geq}^{(10)}  {m_{\mathcal{X}^{\delta^*}} (\varepsilon_{\mathcal{X}^{\delta^*}}, \xi).}
    \end{align*}
    Here, 
    {(6)~is another an application of \conference{the binomial proportion confidence interval}{Lemma~\ref{lemma:binomial_bounds}},}
    (7)~again uses the fact that~$\forall p \in~[0,1]~p\cdot~(1-p) \leq 0.25$,
    (8)~follows from the fact that $m^* \geq \kappa_1$,
    (9)~follows from the definition of $\Delta^*$ and the fact that~$p_{\mathcal{X}^{\delta_{\max}}} = 1 - \varepsilon$
    and
    (10)~follows from the fact that~$m^* \geq \kappa_2$,

    Since both Conditions (C1) and (C2) do not hold for our choice of $\delta^*$ and $m^*$, Alg.~\ref{algo:Learning Phase} does not return 'Fail' which completes the proof of Lemma~\ref{lemma:lcd_loss}.
\end{proof}

\conference{}{
\vspace{2mm}
\noindent
\textbf{Note.} To simplify exposition, Lemma~\ref{lemma:lcd_loss} does not include the distribution used to sample configurations during CD (classification). Here, we assume that this is the same distribution used in the learning phase.
}

\begin{figure}[t]
    \centering
    \subfloat[]{\includegraphics[width=0.32\linewidth]{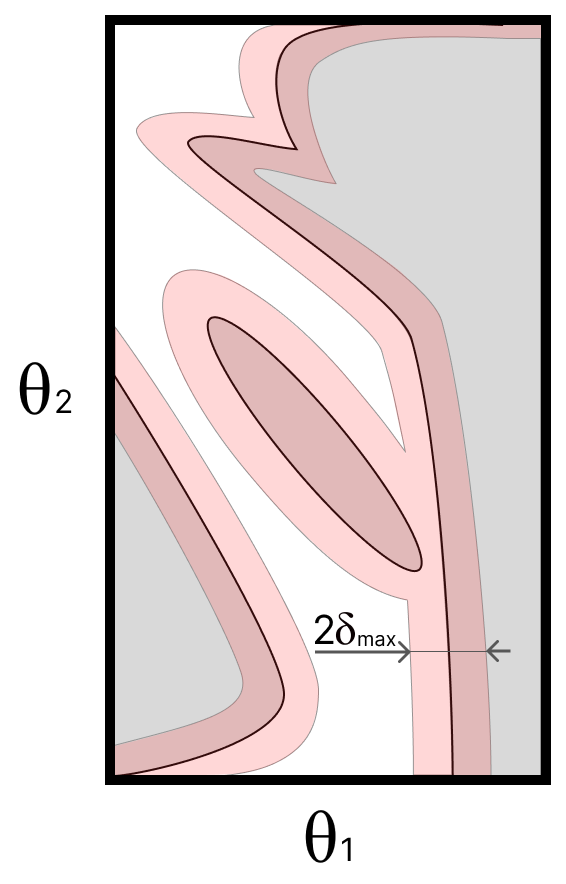}%
        \label{fig:delta_interior_max}}
    \hfill
    \subfloat[]{\includegraphics[width=0.32\linewidth]{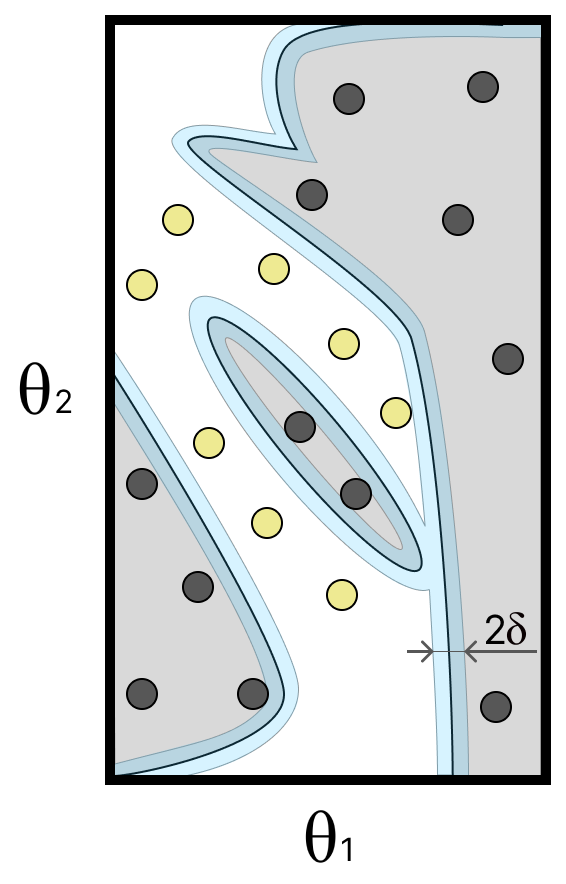}%
        \label{fig:delta_star_samples}}
    \hfill
    \subfloat[]{\includegraphics[width=0.32\linewidth]{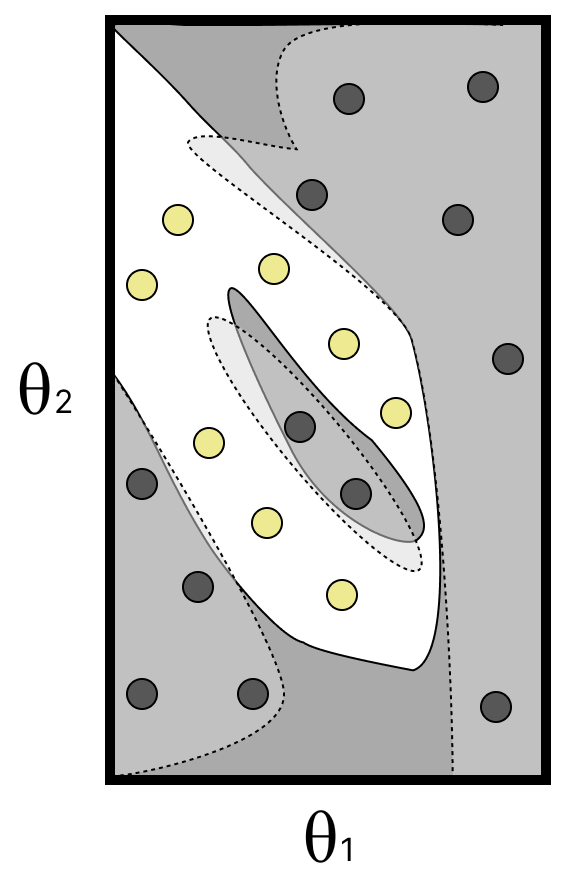}%
        \label{fig:lcd_learned_space}}
    \caption{{Illustration of the learning process for the C-space visualized in Fig.~\ref{fig:workspace_delta} depicting~$\Xfree$ 
    (\fcolorbox{black}{white}{\rule{0pt}{3pt}\rule{3pt}{0pt}})
    and~$\Xforb$
    (\fcolorbox{black}{my_gray}{\rule{0pt}{3pt}\rule{3pt}{0pt}}): 
    (a)~The $\delta$-boundary for $\delta = \delta_{\max}$ (\fcolorbox{black}{my_red}{\rule{0pt}{3pt}\rule{3pt}{0pt}}), where Alg~\ref{algo:Learning Phase} fails because~$\varepsilon_{\Xdelta} \leq 0$ (C1 holds). 
    (b)~The $\delta$-boundary for~$\delta <~\delta^*$~(\fcolorbox{black}{my_cyan}{\rule{0pt}{3pt}\rule{3pt}{0pt}})
    and $m > m^*$ samples, that all lie in the $\delta$-interior for learning the CD, where (C1) and (C2) do not hold. 
    (c) The space learned by the LCD with the boundary between $\Xforb$ and $\Xforb$ dashed and highlighting $\hat{\mathcal{X}_{\text{forb}}}$ 
    (\fcolorbox{black}{my_dark_gray}{\rule{0pt}{3pt}\rule{3pt}{0pt}}), the part estimating $\Xforb$.
    }}
    \label{fig:learning_process}
    \vspace{-5mm}
\end{figure}

For an illustration of this learning process, see Fig.~\ref{fig:learning_process}.

\begin{algorithm}[t]
\caption{Learning-based Collision Detection}
\label{algo:AdaptiveLearning}
\begin{algorithmic}[1]
\Require 
    $\varepsilon$, 
    $\xi$, 
    $\mathcal{X} = [0,1]^d$, 
    \texttt{CD} 
\State $m \gets m_0$, \hspace{3mm} $\delta \gets \delta_0$ \Comment{initial guesses} \label{line:guesses m and delta}

\While{$(\delta,m)\text{-LBCD}(\varepsilon, \xi, \delta, m, \mathcal{X}, \texttt{CD}) = \text{`Fail'}$} \label{line:if lbcd}
    \State $m \gets 2 \cdot  m$, \hspace{3mm} $\delta \gets \delta / 2$ \label{line:update m and delta}

\EndWhile

\State \texttt{LCD} $\gets (\delta,m)\text{-LBCD}(\varepsilon, \xi, \delta, m, \mathcal{X}, \texttt{CD})$ \label{line:get lcd}
\State \Return \texttt{LCD}
\end{algorithmic}
\end{algorithm}

\subsection{LCD with Guarantees}
\label{subsec:lcd2}
Our LCD algorithm (Alg.~\ref{algo:AdaptiveLearning}), receives as input 
a desired error bound~$\varepsilon$, a confidence level~$1 - \xi$, 
and an exact collision detector \texttt{CD}.
It begins with initial guesses for the clearance threshold~$\delta_0$ and sample size~$m_0$ (Line~\ref{line:guesses m and delta}).
At each iteration, the algorithm attempts to learn a CD using these values.

The algorithm calls the $(\delta, m)$-\texttt{LBCD} function (Alg.~\ref{algo:Learning Phase}) with the current values of~$\delta$ and~$m$ (Line~\ref{line:if lbcd}). If the function fails (returns `Fail'), it indicates that either the sample size is too small or the clearance threshold is too large to achieve the desired error bounds. 
In this case, the algorithm doubles the sample size and halves the clearance threshold (Line~\ref{line:update m and delta}). 

This iterative process continues until the function~$(\delta, m)$-\texttt{LBCD} succeeds in computing a learned collision detector  that satisfies the error bound~$\varepsilon$ and the confidence level~$1 - \xi$.
Once the conditions are met, the final collision detector~\texttt{LCD} is returned (Line~7).

\begin{lemma}[Convergence of Adaptive Learning Algorithm]
\label{lemma:convergence}
For any given error bound $\varepsilon > 0$ and confidence level $1 - \xi$, Alg.~\ref{algo:AdaptiveLearning} terminates in a finite number of iterations and returns a learned collision detector (LCD) that guarantees, with confidence at least $1 - \xi$, that the error on a random sample from $\mathcal{X}$ does not exceed $\varepsilon$.
\end{lemma}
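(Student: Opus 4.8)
The statement has two halves---an output guarantee and a termination claim---and I would dispatch them in that order, since the first is essentially free once we have Lemma~\ref{lemma:lcd_loss}. Whenever the \textbf{while} loop of Alg.~\ref{algo:AdaptiveLearning} exits, it exits with a pair $(\delta,m)$ for which the call $(\delta,m)$-\texttt{LBCD} did not return `Fail'; that is exactly the hypothesis under which Lemma~\ref{lemma:lcd_loss} certifies that the returned \texttt{LCD} has, with confidence at least $1-\xi$, error at most $\varepsilon$ on a random configuration of $\mathcal{X}$. Hence the output guarantee is an immediate corollary of Lemma~\ref{lemma:lcd_loss}, and the whole burden of the proof is to show that the loop terminates after finitely many iterations.

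For termination I would analyze the deterministic trajectory traced by the parameters: at iteration $k$ we have $\delta_k = \delta_0\,2^{-k}$ and $m_k = m_0\,2^{k}$, so that the product $m_k\delta_k = m_0\delta_0$ is invariant. I would first show that condition (C1) eventually stops triggering a failure. Using the $\delta_{\max}$ analysis inside the proof of Lemma~\ref{lemma:lcd_loss}---the probability $p_{\mathcal{X}^{\delta}}$ of sampling in the $\delta$-interior is weakly monotone and tends to $1$ as $\delta\to 0$---we get $\delta_k<\delta_{\max}$ for all large $k$, so $p_{\mathcal{X}^{\delta_k}}>1-\varepsilon$ by a definite margin; since simultaneously $m_k\to\infty$, the binomial-confidence correction $z_{\xi/2}\sqrt{\hat p(1-\hat p)/m_k}$ in the numerator of Eq.~\eqref{eq:classification_error} vanishes, driving $\varepsilon_{\mathcal{X}^{\delta_k}}$ strictly above $0$. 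Thus (C1) is satisfied for all sufficiently large $k$.

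The real difficulty, and the step I expect to dominate the proof, is condition (C2): we need an iteration at which $|S_{\text{interior}}| \approx m_k\,p_{\mathcal{X}^{\delta_k}} \ge m_{\mathcal{X}^{\delta_k}}(\varepsilon_{\mathcal{X}^{\delta_k}},\xi)$. Here the two sides grow at incompatible rates along the trajectory: the number of usable interior samples grows only like $m_k\sim 2^{k}$, whereas by Thm.~\ref{thm:sample_complexity_delta} the required count $m_{\mathcal{X}^{\delta_k}}$ scales like $(\sqrt{d}/\delta_k)^{9d/4}\sim 2^{9dk/4}$, which outpaces $2^k$ for every $d\ge 1$. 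Consequently one cannot argue termination by simply pushing $k\to\infty$; the loop must be caught at an intermediate scale. I would therefore anchor the argument on the explicit successful pair $(\delta^*,m^*)$ produced in the proof of Lemma~\ref{lemma:lcd_loss} (with $\delta^*<\delta_{\max}$ and finite $m^*$), and track the first index $k^\dagger$ with $\delta_{k^\dagger}\le \delta^*$, at which $m_{k^\dagger}\ge m_0\delta_0/\delta^*$. The crux is then to verify that this $m_{k^\dagger}$ already clears $m_{\mathcal{X}^{\delta_{k^\dagger}}}(\varepsilon_{\mathcal{X}^{\delta_{k^\dagger}}},\xi)$---equivalently, that the invariant hyperbola $m\delta=m_0\delta_0$ passes above the required-sample curve somewhere in the viable window $\delta\in(0,\delta_{\max})$, whose right end is excluded because $\varepsilon_{\mathcal{X}^{\delta}}\to 0$ (hence $m_{\mathcal{X}^{\delta}}\to\infty$) as $\delta\uparrow\delta_{\max}$, and whose left end is excluded by the $\delta^{-9d/4}$ blow-up.

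Because that window is non-empty only once the initial product $m_0\delta_0$ exceeds the minimum of the required-sample curve over $(0,\delta_{\max})$, the honest conclusion of the plan is that termination hinges on choosing the initial guesses $\delta_0,m_0$ (as functions of $\varepsilon,\xi,d$) small/large enough for the trajectory to enter the success region, and on showing that the discrete halving of $\delta$ lands inside it. I would close by exhibiting such $(\delta_0,m_0)$---using the finite $(\delta^*,m^*)$ from Lemma~\ref{lemma:lcd_loss} to calibrate them---and then combining the eventual validity of (C1) with the single-iteration validity of (C2) at $k^\dagger$ to conclude that the loop exits, after which the output guarantee of the first paragraph applies verbatim. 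Reconciling the competing growth rates of $m_k$ and $m_{\mathcal{X}^{\delta_k}}$ is the part I expect to require the most care.
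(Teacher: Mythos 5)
Your proposal diverges from the paper exactly where it matters, and your diagnosis is correct. The paper's own proof is two sentences long: it takes the pair $(\delta',m')$ whose existence Lemma~\ref{lemma:lcd_loss} guarantees, asserts that ``the same guarantees hold for any $\delta \leq \delta'$ and $m \geq m'$,'' and concludes that after $\max\left(\lceil \log_2(\delta_0/\delta')\rceil, \lceil \log_2(m'/m_0)\rceil\right)$ halving/doubling steps Alg.~\ref{algo:AdaptiveLearning} must succeed. That asserted monotonicity is precisely what your growth-rate comparison refutes: by Thm.~\ref{thm:sample_complexity_delta} the sample demand in condition (C2) scales like $(\sqrt{d}/\delta)^{9d/4}$, so halving $\delta$ multiplies the demand by roughly $2^{9d/4} > 2$ while doubling $m$ only doubles the supply of interior samples; success at $(\delta',m')$ therefore does not propagate to $(\delta'/2, 2m')$, let alone to all $\delta \leq \delta'$, $m \geq m'$ (fixing $m=m'$ and letting $\delta \to 0$ triggers (C2) immediately). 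So the paper's proof has a genuine hole, and your insistence that the loop must be caught ``at an intermediate scale'' rather than in the limit is the right way to see it; the paper's own discussion in Sec.~\ref{subsec:discussion1} and Fig.~\ref{fig:m_vs_delta}, showing $m_{\mathcal{X}^{\delta}} \to \infty$ at both ends of the viable interval, corroborates your window picture.

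The place where your plan stalls is, as you candidly admit, not a defect of your argument but of the statement: since $m_k\delta_k = m_0\delta_0$ is invariant along the trajectory and the demand curve $\delta \mapsto \delta \cdot m_{\mathcal{X}^{\delta}}(\varepsilon_{\mathcal{X}^{\delta}},\xi)$ blows up at both ends of $(0,\delta_{\max})$ (at the right end because $\varepsilon_{\mathcal{X}^{\delta}} \to 0$ in Eq.~\eqref{eq:classification_error}, at the left end because of the $\delta^{-9d/4}$ factor), any initial guesses with $m_0\delta_0$ below the minimum of that curve yield a trajectory that never enters the success region, and Alg.~\ref{algo:AdaptiveLearning} loops forever. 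Hence the lemma, read for arbitrary $(\delta_0,m_0)$, is not provable; a correct version must either treat the initial guesses as quantities chosen in the proof (calibrated from $(\delta^*,m^*)$ of Lemma~\ref{lemma:lcd_loss}, as you propose, together with the check that a dyadic iterate actually lands in the window), or change the update step to grow $m$ by a factor of at least $2^{9d/4}$ per halving of $\delta$, which is what would restore the monotone-entry argument the paper implicitly relies on. Your first paragraph (the exit-time output guarantee) matches the paper's intent and is fine; note only that it requires reading Lemma~\ref{lemma:lcd_loss} as ``every non-failing run yields the guarantee'' --- which is how Eq.~\eqref{eq:classification_error} is engineered --- rather than as the literal existential statement, a reading the paper itself needs but never makes explicit.
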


\begin{proof}
Let $\delta'$ and $m'$ be clearance values and number of samples as guaranteed by Lemma~\ref{lemma:lcd_loss}, and note that the same guarantees hold for any $\delta \leq \delta'$ and $m \geq m'$.
After~$\max 
    \left(
        \lceil \log_2(\delta_0 / \delta') \rceil,
        \lceil \log_2(m' / m_0) \rceil
    \right)
$
iterations, Alg.~\ref{algo:AdaptiveLearning} will call Alg.~\ref{algo:Learning Phase} with clearance values and number of samples such that 
 with confidence at least $1 - \xi$, the error on a random sample from $\mathcal{X}$ of the LCD returned will not exceed $\varepsilon$.
 \end{proof}

\section{Empirical Results}
\label{sec:eval}

In this section, we illustrate the relationship between the different parameters of our theoretical framework and analysis. In our proofs we often need to balance the effects of different parameters in order to guarantee some behavior (e.g., $m_{\Xdelta}$ and $\varepsilon_{\Xdelta}$ in Alg.~\ref{algo:Learning Phase}) and here we visualize this trade off. All results are for the scenario depicted in Fig.~\ref{fig:workspace_delta}.

Fig.~\ref{fig:p_vs_delta} illustrates the relationship between the size $|\mathcal{X}^\delta|$ of the $\delta$-interior and the clearance value $\delta$. 
Recall that the proof sketch of Lemma~\ref{lemma:lcd_loss}  emphasizes the need to decrease $\delta$ below some critical threshold in order for $|\mathcal{X}^\delta|$ to be large enough.
Indeed, we can see empirically that as $\delta$ decreases,  $|\mathcal{X}^\delta|$ increases. Intuitively, this is because  smaller clearance values result in a larger portion of the C-space being considered as part of the $\delta$-interior. Conversely, as $\delta$ increases, the proportion of the $\delta$-interior decreases, as fewer configurations meet the stricter clearance requirement.
Importantly, this graph demonstrates the existence of a critical clearance value~$\delta^*$  between zero and~{$\delta_{\max}$. 
\addExample{For example, an error bound of $\varepsilon = 0.1$ corresponds with $|\mathcal{X}^\delta| {>} 0.9$ (i.e., $1 - \varepsilon$) which implies that $\delta^* {<} 0.028$.}

Fig.~\ref{fig:epsilon_vs_delta} demonstrates how the classification error~$\varepsilon_{\mathcal{X}^{\delta}}$ (Alg.~\ref{algo:Learning Phase}, Line~\ref{line:epsilon interior}) varies with $\delta$ for different overall error bounds~$\varepsilon$. 
We observe that for all $\varepsilon$ values, $\varepsilon_{\mathcal{X}^{\delta}}$ weakly monotonically decreases as $\delta$ increases
and given a fixed clearance $\delta$, larger~$\varepsilon$ values correspond to larger $\varepsilon_{\Xdelta}$

Importantly, we see that $\varepsilon_{\mathcal{X}^{\delta}}$ approaches zero as $\delta$ nears its maximum value for each $\varepsilon$.
This corresponds to the setting where $|\Xdelta|$  approaches $1 - \varepsilon$, beyond which our algorithm cannot provide guarantees. 
\addExample{
For example, when $\varepsilon = 0.1$, we see $\varepsilon_{\mathcal{X}^{\delta}}$ approaching zero near $\delta \approx 0.028$, consistent with our observation from Fig.~\ref{fig:p_vs_delta}.}

Fig.~\ref{fig:m_vs_delta} depicts the relationship between the sample complexity $m_{\mathcal{X}^{\delta}}$ and $\delta$ for different $\varepsilon$ values. As discussed in Sec.~\ref{subsec:discussion1}, $m_{\mathcal{X}^{\delta}}$ approaches infinity as $\delta$ approaches either zero or $\delta^*$, motivating our approach for choosing $\delta$ in Alg.~\ref{algo:AdaptiveLearning}.

\addExample{
Going back to our running numeric example of $\varepsilon = 0.1$ with $\delta^* \leq 0.028$ from Fig.~\subref{fig:p_vs_delta}-\subref{fig:m_vs_delta}, we can see that indeed $m_{\mathcal{X}^{\delta}} \rightarrow \infty$  at zero and around $0.028$ dramatically dropping in between these two extreme values.}

\begin{figure}[t]
\vspace{-4mm}
    \centering
    \subfloat[]{\includegraphics[width=0.29\linewidth]{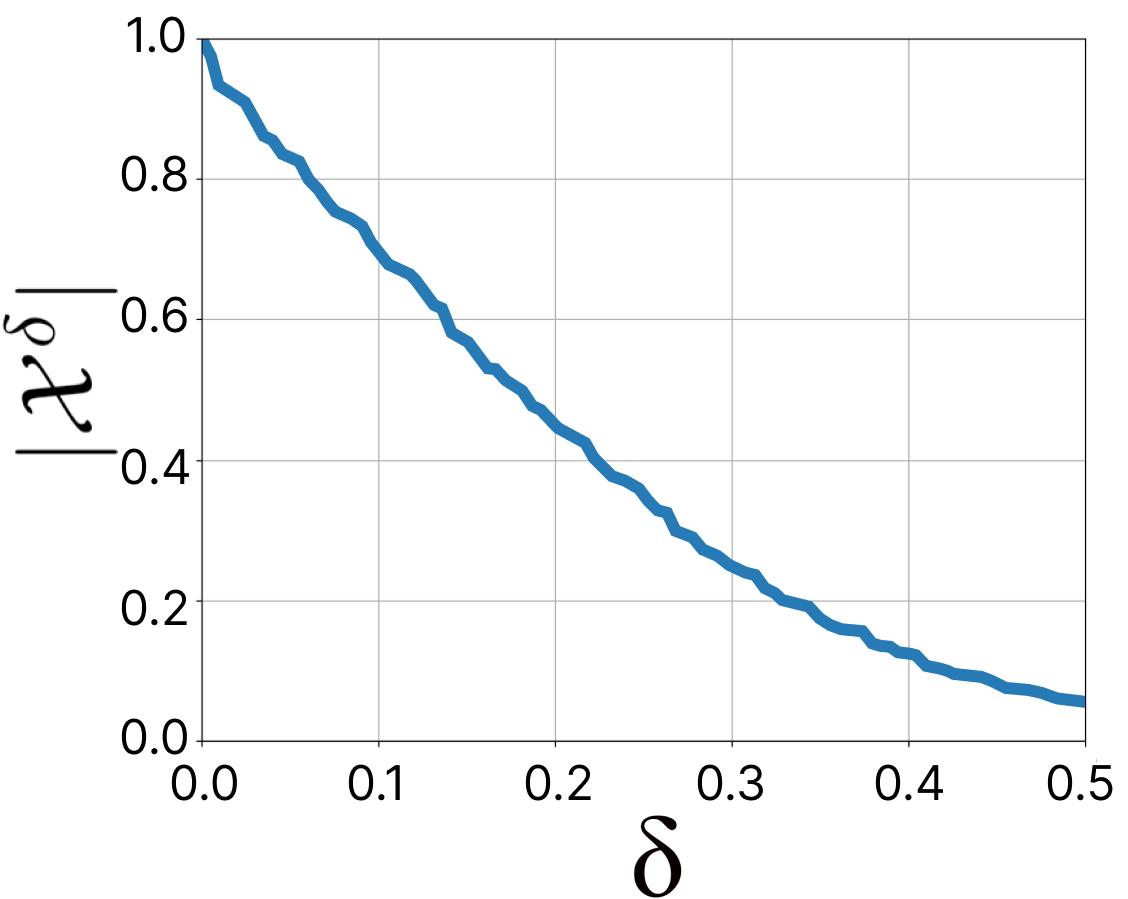}%
        \label{fig:p_vs_delta}}
    \hfill
    \subfloat[]{\includegraphics[width=0.32\linewidth]{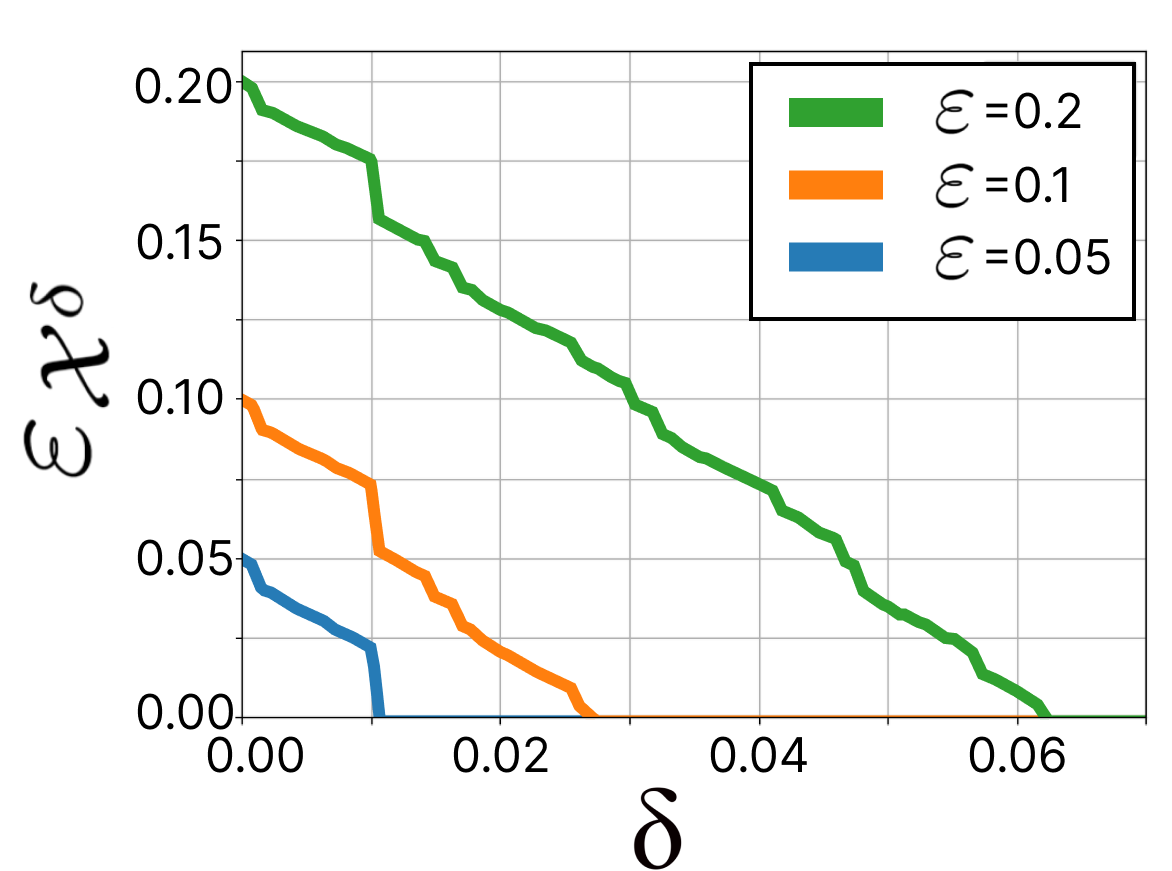}%
        \label{fig:epsilon_vs_delta}}
    \hfill
    \subfloat[]{\includegraphics[width=0.36\linewidth]{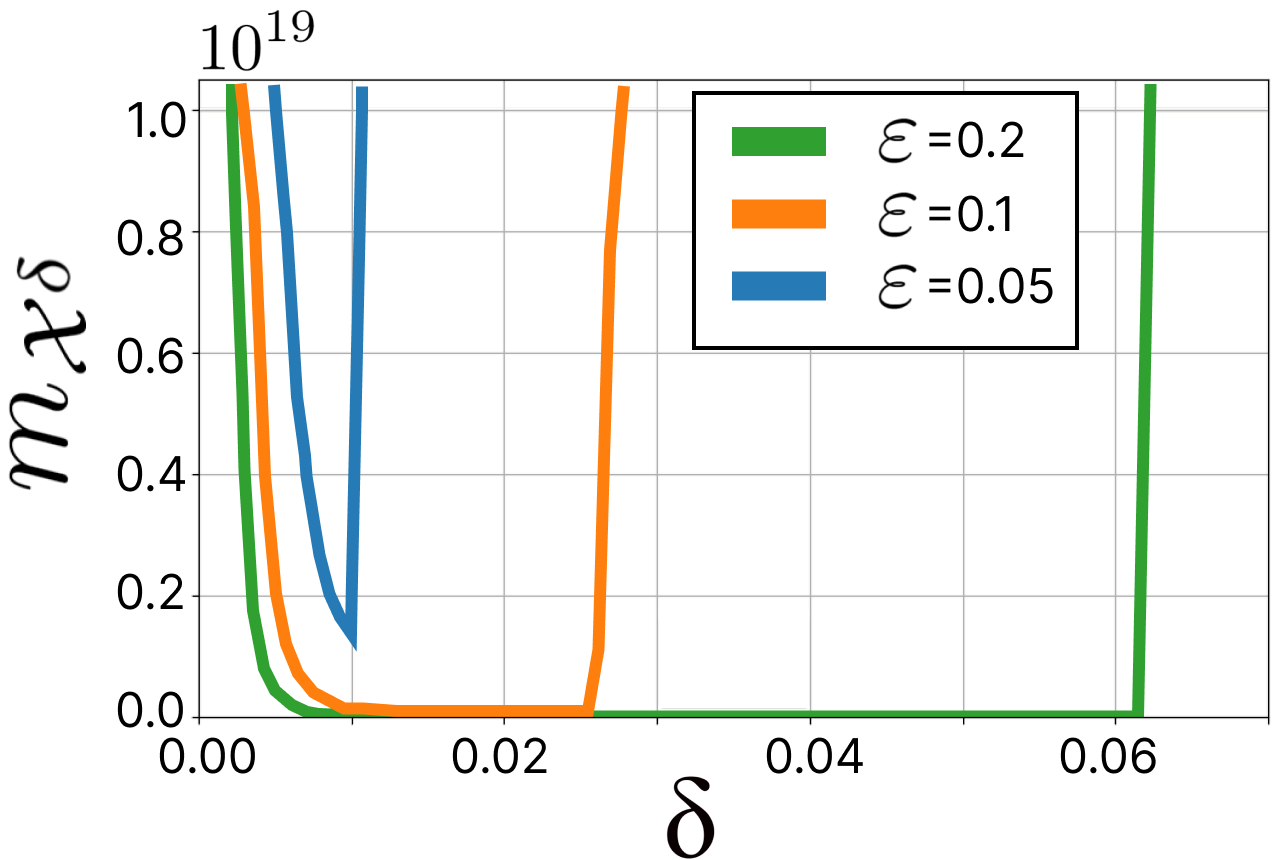}%
        \label{fig:m_vs_delta}}
    \caption{Trends between key parameters in our framework.
    (a)~The size~$\vert \Xdelta \vert$ of the $\delta$-interior as a function of $\delta$. 
    (b)~Classification error~{$\varepsilon_{\Xdelta}$} in $\delta$-interior as a function of $\delta$ for different $\varepsilon$ values. 
    (c)~Sample complexity as a function of $\delta$ for different $\varepsilon$ values.}
    \label{fig:combined_results}
    \vspace{-5mm}
\end{figure}
\section{Conclusion and Future Work}
\label{sec:future}

In this work we presented a theoretical framework that established how to apply theoretical results from the ML literature to SVM-based LCD for MP. This was done by showing the connection between clearance, defined the C-space (where planning occurs) and margin, defined in the feature space (where learning occurs).

This paper focused on establishing formal well-grounded theoretical results. This theoretical nature ignores key aspects making the approach impractical to be used as-is in practical LCDs. 
The crux of the problem is that, similar to some ML-results, our sample-complexity bounds (Eq.\eqref{eq:sample_complexity}) have an exponential dependence 
on the dimensionality $d$ of the C-space and 
on the inverse of the clearance $\delta$. 

Importantly, and despite these limitations, our work introduces LCDs that have formal guarantees on their output. We foresee how such statistical CD can be used within a MP algorithm that can guarantee to produce paths that are collision-free with high probability for any user-defined probability and confidence level. This can be achieved by considering any sequence of configurations and aggregating their individual probabilities of being correctly classified as collision-free. Details are left for future work.

Another avenue for future work includes addressing the practical limitations of our approach. One possible direction is to explore methods to reduce the exponential dependence of the sample-complexity bounds on $d$ and $1/\delta$. 
This could involve selecting different feature mappings with lower dimensionality or investigating alternative ML algorithms that offer similar statistical guarantees but with more favorable computational properties.

\section*{Acknoledgements}
S. Tubbul and O. Salzman were supported in part by the Technion Autonomous Systems Program (TASP) and the Israeli Ministry of Science \& Technology grant No. 3-17385.
Aviv Tamar was Funded by the European Union (ERC, Bayes-RL, 101041250).  Views and opinions expressed are however those of the author(s) only and do not necessarily reflect those of the European Union or the European Research Council Executive Agency (ERCEA). Neither the European Union nor the granting authority can be held responsible for them.

\bibliographystyle{IEEEtran}
\bibliography{IEEEabrv,references}

\conference{}{
\section*{APPENDIX}

\subsection{Statistics}
We present a fundamental result in statistics that will be important in our analysis which is based on a normal approximation of the binomial distribution via the Central Limit Theorem~\cite{agresti1998approximate, brown2002confidence}. 

\begin{lemma}[Binomial Proportion Confidence Interval]
\label{lemma:binomial_bounds}  
Let $\mathcal{X}$ be some space and $\mathcal{X}' \subset \mathcal{X}$ be a subset of~$\mathcal{X}$. 
Let $\mathcal{D}$ be a probability distribution over $\mathcal{X}$, and set~$p_{\mathcal{X}'} = \mathbb{P}_{x \sim \mathcal{D}}[x \in \mathcal{X}']$ to denote the probability that a sample from $\mathcal{X}$, drawn according to $\mathcal{D}$, falls in $\mathcal{X}'$. Given~$m$ independent and identically distributed (i.i.d.) samples\footnote{Importantly, Lemma~\ref{lemma:binomial_bounds} holds under the condition that the sample size $m$ is large enough to justify the normal approximation. This typically occurs for $m \cdot \min(p_{\mathcal{X}'}, 1 - p_{\mathcal{X}'}) \geq 5$~\cite{brown2001interval}.} from~$\mathcal{X}$ according to $\mathcal{D}$, let $m'$ be the number of samples that fall in $\mathcal{X}'$. 

Then, for a confidence level $1 - \xi$, we can bound $p_{\mathcal{X}'}$ by:
\[
p_{\mathcal{X}'} \leq \hat{p}_{\mathcal{X}'} + z_{\xi/2} \cdot \sqrt{\frac{\hat{p}_{\mathcal{X}'} (1 - \hat{p}_{\mathcal{X}'})}{m}},
\]
where $\hat{p}_{\mathcal{X}'} := m' / m$ and $z_{\xi/2}$ is the critical value of the standard normal distribution corresponding to the confidence level $1 - \xi$.
\end{lemma}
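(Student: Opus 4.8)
The plan is to recognize this as the classical Wald confidence interval for a binomial proportion and to establish it through the normal approximation furnished by the Central Limit Theorem (CLT). First I would introduce, for each of the $m$ samples, the indicator random variable $X_i := \mathbf{1}[x_i \in \mathcal{X}']$. Since the samples are i.i.d.\ draws from $\mathcal{D}$, the $X_i$ are i.i.d.\ Bernoulli random variables with success probability $p_{\mathcal{X}'}$, so that $m' = \sum_{i=1}^m X_i$ is distributed as $\mathrm{Bin}(m, p_{\mathcal{X}'})$ and $\hat{p}_{\mathcal{X}'} = m'/m$ is the sample mean. A direct computation gives $\mathbb{E}[\hat{p}_{\mathcal{X}'}] = p_{\mathcal{X}'}$ and $\mathrm{Var}(\hat{p}_{\mathcal{X}'}) = p_{\mathcal{X}'}(1 - p_{\mathcal{X}'})/m$.

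Next I would invoke the CLT to assert that the standardized statistic $(\hat{p}_{\mathcal{X}'} - p_{\mathcal{X}'})/\sqrt{p_{\mathcal{X}'}(1 - p_{\mathcal{X}'})/m}$ converges in distribution to a standard normal $\mathcal{N}(0,1)$ as $m$ grows. Since $\hat{p}_{\mathcal{X}'}$ is a consistent estimator of $p_{\mathcal{X}'}$, Slutsky's theorem lets me replace the true variance in the denominator by its plug-in estimate $\hat{p}_{\mathcal{X}'}(1 - \hat{p}_{\mathcal{X}'})/m$ without changing the limiting distribution. Thus the pivotal quantity $Z := (\hat{p}_{\mathcal{X}'} - p_{\mathcal{X}'})/\sqrt{\hat{p}_{\mathcal{X}'}(1 - \hat{p}_{\mathcal{X}'})/m}$ is, for $m$ large enough, approximately standard normal.

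By the defining property of the critical value $z_{\xi/2}$, namely $\mathbb{P}[Z > z_{\xi/2}] = \xi/2$ for a standard normal $Z$, I obtain $\mathbb{P}[-z_{\xi/2} \leq Z \leq z_{\xi/2}] = 1 - \xi$. Rearranging the event $\{Z \leq z_{\xi/2}\}$ into a statement about $p_{\mathcal{X}'}$ yields the inequality $p_{\mathcal{X}'} \leq \hat{p}_{\mathcal{X}'} + z_{\xi/2}\sqrt{\hat{p}_{\mathcal{X}'}(1 - \hat{p}_{\mathcal{X}'})/m}$; since the two-sided event is contained in this one-sided event, the latter holds with probability at least $1 - \xi$, which is exactly the claimed bound.

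The main obstacle is conceptual rather than algebraic: the result is genuinely an \emph{approximate} one, valid only in the regime where the normal approximation to the binomial is accurate. I would therefore state explicitly the sample-size condition (\eg $m \cdot \min(p_{\mathcal{X}'}, 1 - p_{\mathcal{X}'}) \geq 5$, as in the footnote) under which the CLT-based approximation is trustworthy, and flag that the error incurred by both the normal approximation and the Slutsky variance substitution is asymptotically negligible but nonzero for finite $m$. If an exact non-asymptotic guarantee were instead desired, I would replace the CLT step with a one-sided Chernoff or Hoeffding tail bound at the cost of a looser constant; I mention this only as a remark and proceed with the CLT route to match the stated form of the bound.
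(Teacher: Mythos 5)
Your proposal is correct and takes essentially the same route as the paper: the paper does not actually prove this lemma but imports it as the classical Wald (normal-approximation) confidence interval, citing the statistics literature, and your indicator-variable, CLT-plus-Slutsky derivation is precisely the standard argument underlying that citation, including the same caveat about the approximation regime that the paper relegates to a footnote. One small slip worth fixing: the event that rearranges to the claimed upper bound $p_{\mathcal{X}'} \leq \hat{p}_{\mathcal{X}'} + z_{\xi/2}\sqrt{\hat{p}_{\mathcal{X}'}(1-\hat{p}_{\mathcal{X}'})/m}$ is $\{Z \geq -z_{\xi/2}\}$ rather than $\{Z \leq z_{\xi/2}\}$ (the latter yields the lower confidence bound); by symmetry of the standard normal this has the same probability, and your containment argument applies verbatim to the correct event, so the conclusion stands.
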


\subsection{Proof of Thm.~\ref{thm:Feature-Space Margin}}

We begin by defining the mapping~$g:~[n]^d\rightarrow~\{-1,0,+1\}$ that returns for each cell a label $\pm 1$ if all configurations are in the $\delta$-interior of~$\mathcal{X}$ and zero otherwise:

\begin{equation}
\label{eq:g}
g(\mathbf{i}) = \begin{cases}
        +1, & \text{if } \exists \mathbf{x} \in \textsc{Cell}_{\mathbf{i}}: f(\mathbf{x})=+1, \\
        -1, & \text{if } \exists \mathbf{x} \in \textsc{Cell}_{\mathbf{i}}: f(\mathbf{x})=-1, \\
        0, & \text{otherwise.}
         \end{cases}
\end{equation}

Now, we set
\begin{align*}
    \mathbf{\alpha} := ( &g(1,1,\ldots,1),  g(2,1,\ldots,1), \ldots, g(n,1,\ldots,1),\\
                         &g(1,2,\ldots,1),  g(2,2,\ldots,1), \ldots, g(n,2,\ldots,1),\\
                         &\ldots\\
                         &g(n,n,\ldots,1),  g(n,n,\ldots,2), \ldots, g(n,n,\ldots,n)).
\end{align*}

Note that following our choice of $n$, $g$ is well defined. {Namely, that for each cell, there no two  samples $x_1,x_2$ exist such that $f(x_1)=1$ and $f(x_2)=-1$. This ensures that each cell can be unambiguously labeled as either entirely in the free space, entirely in the forbidden space, or in the boundary region.}

Following Def.~\ref{def:margin}, we will show that there exists some~$\gamma >~0$ such that $\forall x \in \mathcal{X}$, if $f(\mathbf{x})=1$ then $\mathbf{\alpha} \cdot \phi(\mathbf{x}) \geq \gamma$, and if $f(\mathbf{x})=-1$ then $\mathbf{\alpha} \cdot \phi(\mathbf{x}) \leq -\gamma$.

\textbf{Case 1 ($f(\mathbf{x})=1$):}
Assume, without loss of generality, that $\mathbf{x}$ lies within the multidimensional interval defined by~$\prod_{k=1}^{d}\left[\frac{j_k-1}{n}, \frac{j_k}{n} \right)$ and set $\mathbf{j}:= \langle j_1,j_2,\ldots,j_d \rangle$. We have:

\begin{align*}
\mathbf{\alpha}\phi(\mathbf{x}) 
    &= \sum_{\mathbf{i}\in [n]^d}{g(\mathbf{i}) \cdot h_{\mathbf{i}}(\mathbf{x})}\\
    &=  g(\mathbf{j}) \cdot  h_{\mathbf{j}}(x) +
        \sum_{\substack{\mathbf{j}\neq \mathbf{i} \in [n]^d}} g(\mathbf{i}) \cdot h_{\mathbf{i}}(\mathbf{x})\\
    &\stackrel{(1)}{=} 1 \cdot \exp\left(-\frac{\|\textsc{Center}_{\mathbf{j}} - \mathbf{x}\|^2}{\sigma^2}\right) \\
    &\quad + \sum_{\substack{\mathbf{j}\neq \mathbf{i} \in [n]^d}} 
            g(\mathbf{i}) 
                \cdot 
            \exp\left(-\frac{\|\textsc{Center}_{\mathbf{i}} - \mathbf{x}\|^2}{\sigma^2}\right)\\
    &\stackrel{(2)}{\geq} \exp\left(-\frac{\|\textsc{Center}_{\mathbf{j}} - \mathbf{x}\|^2}{\sigma^2}\right) \\
    &\quad - n^d\cdot \exp\left(-\frac{(2\delta - \frac{\sqrt{d}}{2n})^2}{\sigma^2}\right) \\
    &\stackrel{(3)}{\geq} \exp\left(-\frac{\left(\frac{\sqrt{d}}{2n}\right)^2}{\sigma^2}\right) - 
        n^d\cdot \exp\left(-\frac{(2\delta - \frac{\sqrt{d}}{2n})^2}{\sigma^2}\right) \\
    &\stackrel{(4)}{=} \exp\left(-\frac{\left(0.5\delta\right)^2}{\frac{2\delta^2}{\ln(9n^d)}}\right) - 
        n^d\cdot \exp\left(-\frac{\left(1.5\delta\right)^2}{\frac{2\delta^2}{\ln(9n^d)}}\right) \\
    &\stackrel{(5)}{=} \frac{8}{9^{\frac{9}{8}} n^{\frac{d}{8}}} > 0.
\end{align*}

(1) Applies $g(\mathbf{j})=1$ for $\mathbf{x} \in \textsc{Cell}_{\mathbf{j}}$ with $f(\mathbf{x})=1$, and expands $h_{\mathbf{i}}(\mathbf{x})$.

(2) Lower-bounds the sum by setting $g(\mathbf{i}) = -1$ for all cells except $\mathbf{j}$. $(2\delta - \frac{\sqrt{d}}{2n})$ is the minimum distance to a negative cell's center.

(3) Uses $\frac{\sqrt{d}}{2n}$ as the maximum distance between any point in a cell and its center.

(4) Substitutes $n = \frac{\sqrt{d}}{\delta}$ and $\sigma^2 = \frac{2\delta^2}{\ln(9n^d)}$ from the theorem.

(5) Evaluates the expressions, proving the existence of a positive margin.

Since $\mathbf{\alpha}$ is a vector where every value can only take one of the options $\{0,1,-1\}$, in the worst case $|\mathbf{\alpha}|=\sqrt{n^d}$. Hence, by normalizing the term, we get:

\begin{align*}
    \mathbf{\alpha}\phi(\mathbf{x}) &\geq \frac{8}{9^{\frac{9}{8}} n^{\frac{5d}{8}}}, \text{ where } |\mathbf{\alpha}|=1.
\end{align*}

\textbf{Case 2 ($f(\mathbf{x})=-1$):}
This case is symmetrical to Case 1 and results in $\mathbf{\alpha}\phi(\mathbf{x}) \leq -\frac{8}{9^{\frac{9}{8}} n^{\frac{5d}{8}}} < 0$, where $|\mathbf{\alpha}|=1$.

Therefore, we have shown that $\Phi$ contains a positive margin $\gamma^* \geq \frac{8}{9^{\frac{9}{8}} n^{\frac{5d}{8}}}$.

\subsection{Proof of Thm.~\ref{thm:sample_complexity_delta}}
We start with the error bound from Thm. 15.4 in \cite{shalev2014understanding}:

\begin{equation}
\label{eq:error_bound}
\varepsilon \leq \sqrt{\frac{4(\frac{\rho'}{\gamma'})^2}{m_{\Xdelta}}} +  \sqrt{\frac{2\ln{\frac{2}{\xi}}}{m_{\Xdelta}}}.
\end{equation}

where $\varepsilon$ is the error, $m$ is the sample size, $\rho'$ is the radius of the ball containing the support of the distribution, $\gamma'$ is the margin, and $\xi$ is the confidence parameter.

Let $a = \frac{4(\frac{\rho'}{\gamma'})^2}{m_{\Xdelta}}$ and $b = \frac{2\ln{\frac{2}{\xi}}}{m_{\Xdelta}}$. Applying the inequality~$\sqrt{a} + \sqrt{b} \leq 2\sqrt{a + b}$, we get:

\begin{equation}
\varepsilon \leq 2\sqrt{\frac{4(\frac{\rho'}{\gamma'})^2 + 2\ln{\frac{2}{\xi}}}{m_{\Xdelta}}}.
\end{equation}

Algebraic manipulations to isolate $m$:

\begin{equation}
m_{\Xdelta} \leq \frac{8}{\varepsilon^2} \left[2\left(\frac{\rho'}{\gamma'}\right)^2+\ln\left(\frac{2}{\xi}\right)\right].
\end{equation}

Now, we substitute the values of $\rho'$, $\gamma'$ and $n$ from Thm.~\ref{thm:Feature-Space Margin}:
$$
\rho' = \sqrt{n^d} = n^{\frac{d}{2}} \text{(since} \|\phi(\mathbf{x})\|^2 \leq n^d\text{)}
$$
$$
\gamma' \geq \frac{8}{9^{\frac{9}{8}} n^{\frac{5d}{8}}}
$$
$$
n = \frac{\sqrt{d}}{\delta}
$$

Substituting these into our bound:

\begin{equation}
m_{\Xdelta} \leq \frac{1}{\varepsilon^2} \left[
    \frac{9^{\frac{9}{4}}}{4}\left(\frac{\sqrt{d}}{\delta}\right)^{\frac{9d}{4}}+8\ln\left(\frac{2}{\xi}\right)
\right].
\end{equation}

This completes the proof, providing an upper bound on the sample complexity $m$.
}


\end{document}